\documentclass{article}

\usepackage{amsmath}
\usepackage{mathrsfs}
\usepackage{mathtools}
\usepackage{amssymb}
\usepackage{enumerate}
\usepackage{dsfont}
\usepackage{bm}
\usepackage{graphicx}
\usepackage{hyperref}
\usepackage{physics}
\usepackage[a4paper, lmargin=2.5cm, rmargin=2.5cm, bmargin=4cm, marginpar=3.0cm]{geometry}
\usepackage{natbib}
\usepackage{authblk}

\usepackage{xcolor}
\usepackage[normalem]{ulem}

\numberwithin{equation}{section}

\usepackage{amsthm}

\newtheorem{theorem}{\textbf{Theorem}}

\newtheorem{definition}[theorem]{\textit{Definition}}

\newtheorem{corollary}[theorem]{\textbf{Corollary}}
\newtheorem{lemma}[theorem]{\textbf{Lemma}}
\newtheorem*{remark}{\textbf{Remark}}

\DeclarePairedDelimiterX\set[1]\lbrace\rbrace{\def\given{\;\delimsize\vert\;\allowbreak}#1}

\newcommand{\B}{\mathcal{B}}
\newcommand{\D}{\mathcal{D}}
\newcommand{\E}{\mathbb{E}}
\newcommand{\F}{\mathcal{F}}

\renewcommand{\H}{\mathcal{H}}

\newcommand{\N}{\mathcal{N}}
\newcommand{\M}{\mathcal{M}}
\newcommand{\R}{\mathbb{R}}
\newcommand{\V}{\mathcal{V}}
\newcommand{\W}{\mathcal{W}}
\newcommand{\X}{\mathcal{X}}

\newcommand{\di}{\diamond}

\DeclareMathOperator*{\relu}{ReLU}
\DeclareMathOperator*{\spn}{span}

\usepackage{parskip}

\title{\vspace{-1.5cm}Deep Networks are Reproducing Kernel Chains}
\author[1,$\ddag$,*]{Tjeerd Jan Heeringa}
\author[1,*]{Len Spek}
\author[1]{Christoph Brune}
\affil[1]{Mathematics of Imaging \& AI, University of Twente, Enschede, The Netherlands}
\affil[$\ddag$]{Corresponding author: t.j.heeringa@utwente.nl}
\affil[*]{Authors contributed equally}
\setcounter{Maxaffil}{0}

\date{\today}

\begin{document}

\maketitle

\begin{abstract}
    Identifying an appropriate function space for deep neural networks remains a key open question. While shallow neural networks are naturally associated with Reproducing Kernel Banach Spaces (RKBS), deep networks present unique challenges. In this work, we extend RKBS to chain RKBS (cRKBS), a new framework that composes kernels rather than functions, preserving the desirable properties of RKBS. We prove that any deep neural network function is a neural cRKBS function, and conversely, any neural cRKBS function defined on a finite dataset corresponds to a deep neural network. This approach provides a sparse solution to the empirical risk minimization problem, requiring no more than $N$ neurons per layer, where $N$ is the number of data points.
    \\ \\
    \textbf{keywords: }Neural networks, Reproducing Kernel Banach Spaces, Representer Theorem 
\end{abstract}

\section{Introduction}
While deep neural networks have proven very powerful for many machine learning problems, a fundamental understanding of such methods is still being developed. A key open question is which function space is appropriate for deep neural networks. 

For shallow neural networks, appropriate spaces are Reproducing Kernel Banach Spaces (RKBS). A well-known example is the Barron space \citep{e_towards_2020,spek_duality_2023}. Such RKBS share many of the properties of the widely successful Reproducing Kernel Hilbert Spaces (RKHS). Desirable properties include their reproducing properties through their kernel, their sparsity through the representer theorem and their concise description \citep{bartolucci_understanding_2023}. An appropriate space for deep networks should preferably have these properties as well, and be a Banach space for all commonly used activation functions.
\begin{figure}[h!]
    \centering
    \includegraphics[width=0.65\linewidth]{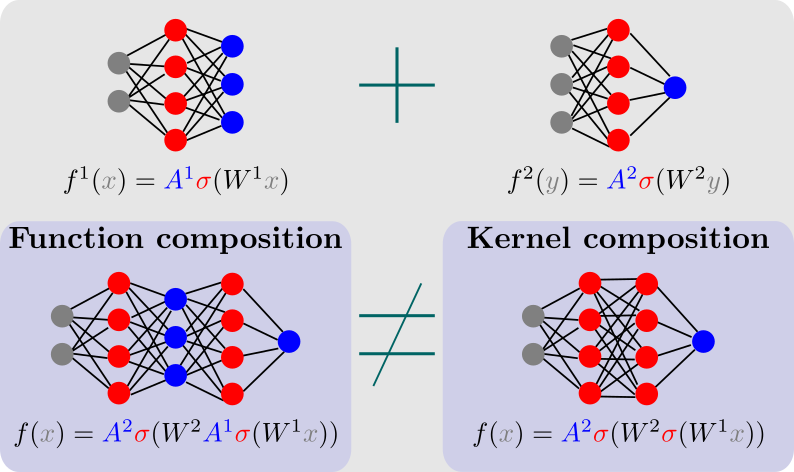}
    \caption{Deep networks are not compositions of shallow networks. (left) Function composition leads to \textit{undesired} extra bottleneck layer \textcolor{blue}{$A^1$ in blue}; (right) Kernel composition effectively \textit{matches} layers directly.}
    \label{fig:compositions}
\end{figure}

The key observation is that deep networks are not compositions of shallow networks. As seen in Figure~\ref{fig:compositions}, matching one network's output with another network's input introduces an extra linear layer, which is not present in a conventional deep network construction. Instead, via kernel composition the hidden layers are matched directly.

We use this intuition to construct our RKBS for deep networks, where instead of composing the functions of shallow RKBS, we only compose their kernels. This allows all the desirable properties of the shallow neural network spaces to be carried over and some of these to be strengthened.

In this paper, we introduce a constructive method for composing kernels of RKBSs, which we call kernel chaining. A special subclass called neural chain RKBS (cRKBS) represents neural networks: Any deep neural network is a neural cRKBS function, and any neural cRKBS function over a finite number of data points is a deep neural network. We show that these chain spaces satisfy a representer theorem, in which the sparse solution has in each layer at most as many neurons as the number of data points. 

\subsection{Related work}
Deep networks and their corresponding spaces can be grouped into three categories. Before discussing these, we introduce foundational related work about shallow networks.

\subsubsection{Shallow neural networks}
Reproducing Kernel Hilbert Spaces (RKHS) are Hilbert spaces with the extra requirement that the point evaluations are bounded linear functionals. These point evaluations are linked to a kernel function through the Riesz-Fréchet representation theorem. This allows function evaluations to be written as an inner product of the kernel and the function itself. There is a unique link between the RKHS and its kernel in the sense that every RKHS has a unique kernel and every kernel uniquely defines a RKHS \citep{aronszajn_theory_1950}. An important result in RKHS theory is the representer theorem. This states that the regularized minimization problem of finding the best fitting function in the RKHS given some data points has a sparse solution. This turns the infinite-dimensional minimization problem over the whole RKHS into a finite-dimensional one with dimension bounded by the number of samples, which can then be solved using a standard least squares approach.

Shallow neural networks correspond to RKHS in both the random feature limit and the lazy training limit. In the random feature limit, the hidden-layer parameters are subsampled from a particular probability distribution $\pi$, and then the output layer takes a linear combination \citep{rahimi_random_2007}. If we however consider infinite linear coefficients $a\in L^2(\pi)$, we get functions $f$ of an RKHS 
\begin{equation}
    f(x) = \int a(w,b) \sigma(w^Tx + b) d\pi(w,b)
\end{equation}
The kernel of this RKHS depends on both the nonlinearity in the network and the particular parameter distribution. This interpretation has strong connections with Gaussian processes, and thus this limit is also called the GP-limit \citep{hanin_random_2023}. In the lazy training limit, the key observation is that the hidden-layer parameters don't change much during training for sufficiently large networks and randomised initializations. Hence, the networks can be approximated by a linearization around such an initialization. These linearizations are elements of a RKHS with a kernel depending on both the gradient and the random initialization. This kernel is called the Neural Tangent Kernel (NTK), and thus this limit is also called the NTK-limit \citep{jacot_neural_2020}.

Neither the random feature limit nor the lazy training limit captures the key aspects of neural networks \citep{chizat_lazy_2019, woodworth_kernel_2020}. In particular, a RKHS seems to be too small to have the right adaptability seen in neural networks \citep{bach_breaking_2017}. Four major research directions can be seen as extensions to the RKHS theory. These are the extended RKHS (eRKHS), variational splines, Reproducing Kernel Banach Spaces (RKBS) and the Barron spaces. 

The extended RKHS is our term for the asymmetric kernel RKHS and the Hyper-RKHS \citep{he_learning_2022, he_learning_2024, liu_generalization_2021}. The main idea is to split the problem into two pieces: First, you determine a suitable kernel from a potentially infinite set of kernels for your RKHS and then determine the right function from the RKHS corresponding to this kernel. The resulting function spaces are still RKHS.

For the variational splines, the main idea is that ReLU functions are examples of splines \citep{parhi_banach_2021}. Using classical spline theory, a function space for neural networks can be constructed that has a similar representer theorem. A benefit of this approach is that the norm can be defined without making references to the weights of the network. A downside of this approach is that it does not work for all activation functions.

Instead of a fixed distribution $\pi$ to sample the weights, the Barron space considers all possible probability distributions $\pi$ that satisfy a growth limit \citep{e_towards_2020}
\begin{equation}
    f(x) = \int a\sigma(\braket{x}{w}+b)d\pi(a,w,b)
\end{equation}
Here, the growth limit is imposed to ensure the integral is well-posed when $\sigma$ is Lipschitz. The formulation implies that Barron space is a Banach space isomorphic to a quotient over growth-limited Radon measures with bounded point evaluations \citep{e_representation_2022}. The space is not an RKHS, but a union of infinitely many RKHS \citep{e_barron_2022, spek_duality_2023}. On compact domains, Barron spaces embed into $C^{0,1}$ and $L^p$ spaces \citep{e_representation_2022,e_barron_2022} and functions in the space can be approximated by finite width neural networks in $L^2$ and $L^\infty$ with bound scaling proportionally to the inverse square root of the width of the network \citep{e_towards_2020}. Moreover, their duality structure is well-understood \citep{spek_duality_2023} and their dependence on the activation function characterized \citep{heeringa_embeddings_2024,li_complexity_2020,caragea_neural_2020}. They satisfy a representer theorem \citep{parhi_banach_2021, bartolucci_understanding_2023}, but the finite-dimensional problem is now a non-convex optimisation problem and can't be solved by a least-squares approach like RKHS. For ReLU, the corresponding Barron space agrees with the variational spline function space \citep{bartolucci_understanding_2023}.

RKBSs are the Banach analogue to RKHSs in the sense that they are Banach spaces of functions with bounded point evaluations \citep{zhang_reproducing_2009,lin_reproducing_2022}. When RKBSs are not Hilbert spaces, they don't have access to the Riesz-Fréchet representation theorem for Hilbert spaces. This breaks the symmetry between primal and dual but allows for larger and more expressive function spaces. RKBSs always come in RKBS pairs: an RKBS $\B$ over $X$ and an RKBS $\B^\di$ over $\Omega$ with a non-degenerate pairing between them. The kernel is a scalar function of $X$ and $\Omega$. For an RKHS $\H$, $\H=\B=\B^\di$ and $X=\Omega$. The archetypical example of a non-Hilbertian RKBS is the space of continuous functions on a compact domain with the supremum norm. The Barron spaces (and thus also the variational splines) are instances of neural RKBSs \citep{spek_duality_2023}, a subclass of the integral RKBSs. Results for Barron spaces carry over to the neural RKBS, and several have extensions to the vector-valued case \citep{bartolucci_understanding_2023, parhi_banach_2021, shenouda_variation_2024, spek_duality_2023} and some have extensions to integral RKBSs considering different pairings than the measures with the (growth-limited) continuous functions, like ones based on Lizorkin distributions or Kantorovich-Rubinstein norms \citep{neumayer_explicit_2023, bartolucci_lipschitz_2024}. 

\subsubsection{Deep neural networks}
As mentioned, the approaches for deep networks can be grouped into three categories: the generalized Barron spaces, the hierarchical spaces and the bottlenecked spaces. These three categories are extensions to the directions introduced under shallow neural networks before.

The generalized Barron spaces category consists of the neural tree spaces \cite{e_banach_2020}, a direct extension of the Barron spaces. These spaces are layered, with each layer defined based on the previous one, branching out in a tree-like fashion. The first layer is a standard Barron space. The next layer is constructed by integrating over the unit ball of functions of the previous layer, i.e. the functions are of the form 
\begin{equation}
    f(x) = \int \sigma(g(x))d\mu(g)
\end{equation}
with $g$ a function from the neural tree spaces with one less depth. The neural tree spaces consist of Lipschitz functions, have bounded point evaluations, and satisfy a direct approximation theorem as well as an inverse approximation theorem. Thereby extending several of their previously proven results from Barron spaces over to the neural tree spaces. The representer theorem shows that a sparse solution with at most $N^\ell$ neurons for layer $\ell$ exists, for a total of at most $N^{2L-1}$ parameters.

The hierarchical spaces category consists of the Neural Hilbert Ladder (NHL) \citep{chen_neural_2024}, a direct extension of RKHS with hierarchical kernels \citep{huang_hierarchical_2023}. They have now multiple RKHS arranged in a layerwise fashion, each with multiple different choices of kernel. These follow a similar approach as the neural tree spaces, in the sense that the first kernel is given by
\begin{equation}
    k(x_1,x_2) = \braket{x_1}{x_2}_\X
\end{equation}
and the later kernels are given
\begin{equation}\label{eq:NHL_kernel}
    k^{\ell+1}(x_1,x_2) = \int\sigma(g(x_1))\sigma(g(x_2))d\pi^\ell(g)
\end{equation}
with the integral over the functions from the NHL with one less depth. To define the kernel in \eqref{eq:NHL_kernel}, a choice has to be made for the probability measure $\pi^\ell$. Each sequence of $\set{\pi^\ell}_{\ell=1}^L$ defines a hierarchical RKHS. The $(L,p)$-NHL functions space $\F^{(L)}_p$ consists of functions with a finite complexity,
\begin{equation}\label{eq:NHL_quasi-norm}
    \mathfrak{C}^{(L)}_p(f) := \inf_{\H} \norm{f}_{\H}\D^{(L)}_p(\H)
\end{equation}
with the Hilbert spaces $\H$ in the infimum being constructed using a sequence of probability measures $\set{\pi^\ell}_{\ell=1}^L$ and $\D^{(L)}_p(\H)$ a weight for that space. The $\mathfrak{C}^{(L)}_p$ is only a quasi-norm when $\sigma$ is homogeneous due to its otherwise unbounded quasi-triangle constant. For $L=p=2$, the NHL $\F^{(L)}_p$ is equivalent to a Barron space since the complexity agrees then with the Barron norm. 

The bottle-necked spaces consist of all approaches in which shallow neural networks are composed function-wise. The non-Hilbertian approaches within this category are the deep RKBSs and the deep variational splines \citep{bartolucci_neural_2024, parhi_what_2022}. In the former, they are compositions of vector-valued variational spline spaces and in the latter, the functions are compositions of vector-valued RKBSs. In both cases, the resulting set of functions is not a normed vector space and is dependent on the intermediate sets. The deep variational splines are restricted to using the rectifier power unit (RePU), the higher order version of ReLU, as an activation function, whereas the deep RKBSs have no such restriction. The networks in the deep variational splines have skip connections, whereas the networks in the deep RKBSs do not. In the representer theorem, the former has neural networks with alternating finite and uncountable layers, whereas the latter has networks with alternating countable and uncountable layers. The difference in construction results in different scaling in the number of parameters in the representer theorem, with the former having at most $\sum_{\ell=2}^{L}d_\ell(Nd_{\ell}+d_{\ell-1}+N)$ total parameters for a network with predetermined intermediate sizes $\set{d^\ell}_{\ell=1}^{L}$ and the latter having at most $\sum_{\ell=2}^{L}d_\ell(d_{\ell-1}+1)$ total parameters for a network for some set of intermediate sizes $\set{d^\ell}_{\ell=1}^{L}$ satisfying $d^{\ell+1}\leq Nd^{\ell}$.

The generalized Barron spaces are limited to ReLU, their kernel structure is unknown and their representer theorem has a solution growing exponentially in width with increasing depth. The hierarchical spaces are, just like the bottle-necked spaces, no longer normed function spaces. Hence, neither of these three categories provides a function space satisfying the necessary and preferred properties for being an appropriate function space for deep neural networks. Therefore, the key question is still open. 

\subsection{Our contribution}
In this work, we introduce a constructive method for composing kernels of RKBSs which results in chain RKBSs (cRKBS). These spaces are distinct from deep compositional RKBS of \citet{bartolucci_neural_2024} in the sense that not the functions of the respective spaces are composed but the kernels of the respective spaces. We show that cRKBS preserve the RKBS structure with a proper kernel. 

Next, we make the cRKBS framework more concrete by focussing on integral RKBS, where functions are defined as integrals over either the first or second argument of the kernel with respect to some measure. In particular, we focus on kernels which are a combination of an elementwise non-linearity and an affine transformation. We call integral RKBS with such kernels neural RKBS. We develop a concise formula of the function of neural chain RKBS in terms of the functions of a neural cRKBS with one less layer.

This leads to the following main theorem of this work, which precise statement is given by Theorems~\ref{thm:deep_networks_in_neural_cRKBS}~and~\ref{thm:finite_data_neural_cRKBS}. 

\begin{theorem}\label{thm:summary}
Every deep neural network of depth $L$ is an element of the neural cRKBS for depth $L$. Conversely, if we only consider $N$ data points, then all functions in a neural cRKBS are deep networks with at most $N$ hidden nodes per layer and all the weights, except the last layer, are shared.
\end{theorem}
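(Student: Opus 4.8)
The plan is to establish the two implications separately, as the two halves of Theorems~\ref{thm:deep_networks_in_neural_cRKBS} and~\ref{thm:finite_data_neural_cRKBS}. Both halves rest on two ingredients that are in hand before this point: the recursive formula expressing a depth-$L$ neural cRKBS function as a shallow neural RKBS map applied to a vector-valued depth-$(L-1)$ neural cRKBS function, and the fact that a neural cRKBS is a genuine RKBS carrying a proper composed kernel $K^L$. I would run the first half as a direct induction on $L$ and the second half through the reproducing/representer structure of this kernel.

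For the forward direction I would induct on the depth. The base case $L=1$ is a shallow network, which is a neural RKBS function: its finite neuron sum is exactly the integral representation $f(x)=\int a\,\sigma(\langle w,x\rangle+b)\,d\mu$ against the atomic measure $\mu=\sum_i a_i\delta_{(w_i,b_i)}$. For the inductive step I would write a depth-$L$ network as one affine-and-activation layer composed with a depth-$(L-1)$ network $g$, invoke the hypothesis to view $g$ as a depth-$(L-1)$ neural cRKBS function, and then use the recursive formula to attach the top layer by choosing the outermost measure atomic at the neuron weights. The only thing to verify is that the chain kernel reproduces the network without the spurious bottleneck that function composition would insert (the layer $A^1$ of Figure~\ref{fig:compositions}); since a finite network already passes directly from one hidden layer to the next, this matching is exactly what the composed kernel encodes.

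The reverse direction is the substantive one, and here I would lean on the RKBS structure rather than on a bare layerwise sparsification. Because the neural cRKBS is an RKBS with a proper kernel $K^L$, the representer theorem supplies, for the $N$ prescribed data values $f(x_1),\dots,f(x_N)$, a sparse interpolant of the form $\sum_{j=1}^{N}c_j\,K^L(\cdot,\omega_j)$ using at most $N$ dual atoms $\omega_j$. The task is then to unfold the composed kernel $K^L$ through the recursive formula and read this expansion off as a deep network: each layer should contribute at most $N$ neurons, one per atom, the between-layer weights should be determined by the atoms propagated through the chain (and hence shared and data-determined rather than free), and only the outermost coefficients $c_j$ should remain free. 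This is the mechanism behind the claim of at most $N$ hidden nodes per layer with all weights except the last shared.

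I expect the unfolding step to be the main obstacle. Translating the abstract sparse kernel expansion into an explicit network requires understanding precisely how the composed kernel factors layer by layer, and in particular showing that the $N$ dual atoms induce exactly $N$ neurons in each layer rather than proliferating to the $N^2$ or $N^\ell$ counts that a naive top-down reproduction of intermediate activations---or the function-composition viewpoint of the neural tree spaces---would force. This is exactly the place where chaining kernels instead of composing functions pays off, and where the shared-weight identification of each layer's input directions with the previous layer's reproducing atoms must be made rigorous. By contrast, the forward direction and the well-definedness of the composed neural cRKBS function are routine once the recursive one-layer formula and the kernel of the cRKBS are available.
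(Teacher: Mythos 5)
Your forward direction matches the paper's proof of Theorem~\ref{thm:deep_networks_in_neural_cRKBS}: induction on $L$, atomic measures at the neuron parameters for the base case, and for the inductive step Dirac masses placed at the pairs $\left(f_j^{L-1},b_j^L\right)\in\B^{L-1}\times\R$ coming from the rows of the penultimate layer. That half is essentially the paper's argument and is complete.

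The reverse direction has a genuine gap, and it sits exactly where you flag it: the ``unfolding'' step. Invoking a representer theorem to obtain $f=\sum_{j=1}^N c_j K^L(\cdot,\omega_j)$ only controls the top layer; it says nothing about why the $N$ atoms $\omega_j=(f_j^{L-1},b_j)$ can themselves be realised with $N$ neurons per layer and shared inner weights rather than proliferating to $N^\ell$. The paper closes this with a linear-algebra lemma (Lemma~\ref{lemma:shared_basis}) that your proposal does not supply: when $|X|$ is finite, $\B^{\di}=\spn\set{K_x\given x\in X}$ is finite-dimensional of some dimension $N\leq|X|$, and one can choose $N$ points $w_j\in\Omega$ so that the matrix $\hat K_{ij}=K(x_i,w_j)$ has full rank, whence $\set{K_{w_j}\given j=1,\hdots,N}$ is a basis of the \emph{whole} space $\B$ (itself $N$-dimensional, since functions on a finite $X$ are determined by their values there). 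Two consequences are essential and absent from your sketch. First, the $N$ atoms at level $L$ are chosen once and for all, depending only on $X$ and the spaces, not on the particular $f$; a representer-theorem expansion would hand you $f$-dependent atoms, which would destroy weight sharing. Second, the induction hypothesis must be stated with the quantifiers in the right order: there exist inner weights $W^0,\hdots,W^{L-2}$ and biases $b^1,\hdots,b^{L-1}$ that work simultaneously for \emph{every} element of $\B^{L-1}$, with only the final row vector depending on the element. Applying that universal statement to the $N$ fixed basis atoms $f_1^{L-1},\hdots,f_N^{L-1}$ lets you stack their individual row vectors into a single $N\times N$ matrix $W^{L-1}$; this is precisely why each layer stays at width $N$ and why all weights except the last layer are shared. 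Without the shared-basis lemma and this quantifier structure, the unfolding you describe as the main obstacle does not go through. (The paper uses the Bredies--Pikkarainen representer theorem only afterwards, to identify the norm of $f$ with the weighted $1$-norm of the last layer.)
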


We prove this theorem by leveraging the primal-dual relation between RKBS pairs and that the activation function acts only elementwise. This also implies that neural cRKBSs satisfy a representer theorem with sparse solutions of most $N$ hidden neurons for each layer where $N$ equal to the amount of data points, for a total of at most $N(N+1)(L+1)$ parameters. 

\section{Chain Reproducing Kernel Banach Spaces}
In this section, we will start by reviewing the theory of Reproducing Kernel Banach Spaces. Afterwards, we will introduce a procedure to construct chain RKBS by composing their kernels. This is done by iteratively adding 'links' to make a 'chain'.

\subsection{Reproducing Kernel Banach Spaces}
We start with a Banach space of functions $\B$ on a domain $X$ mapping to the reals. This means that elements of $\B$ are determined by function evaluation on $X$, i.e. $f(x)=0$ for all $x\in X$ implies that $f$ is the zero vector. This also means that $\B$ contains true functions, not function classes like in a Lebesgue space. We say that $\B$ is reproducing on $X$ when point evaluation is a bounded functional.
\begin{definition}\label{def:rkbs}
Let $\B$ be a Banach space of real functions on a domain $X$. $\B$ is a Reproducing Kernel Banach space (RKBS) if, for every $x\in X$, there exists a $C_x>0$
\begin{equation}
|f(x)|\leq C_x \|f\|_{\B}
\end{equation}
for all $f\in \B$.    
\end{definition} 
Notably, this constant $C_x$ is independent of $f$. This means that the functional which evaluates functions at $x$, denoted by $K_x$, must be an element of the dual space $\B^*$. 

The archetypical example of such an RKBS is the Banach space of continuous functions $C(X)$ on a compact set $X$, where the max-norm trivially satisfies the above condition. The corresponding functional here is the point measure $\delta_x$, which is an element of the dual of the continuous functions, the space of Radon measures $\M(X)$.

A common way of constructing different RKBSs is with a feature space $\Psi$ and a feature map $\psi:X\mapsto \Psi^\ast$.
\begin{theorem}\citep[Proposition 3.3]{bartolucci_understanding_2023}\label{thm:rkbs_quotient}
A Banach space $\B$ of functions on $X$ is reproducing if and only if there exists a Banach space $\Psi$ and a map $\psi: X \mapsto \Psi^*$ such that 
\begin{equation}
\begin{split}
    \B \simeq \Psi/\mathcal{N}(A)\\
    \|f\|_{\B} = \inf_{f=A\nu} \|\nu\|_{\Psi}
\end{split}
\end{equation}
where the linear transformation $A$ maps elements of the feature space $\Psi$ to functions on $X$ and is defined as
\begin{equation}
    (A\nu)(x) := \langle \psi(x)| \nu \rangle
\end{equation}
for all $x\in X$ and $\nu \in \Psi$.
\end{theorem}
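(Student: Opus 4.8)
The plan is to prove both implications directly. The reverse direction (a feature representation forces bounded point evaluation) is a one-line duality estimate, while the forward direction (every RKBS admits such a representation) is established by exhibiting the tautological feature space $\Psi = \B$ itself.

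First I would treat the direction asserting that a feature representation implies the reproducing property. Assume $\Psi$, $\psi$ and $A$ are given with $\B \simeq \Psi/\mathcal{N}(A)$ isometrically and $(A\nu)(x) = \braket{\psi(x)}{\nu}$. Fix $x\in X$ and $f\in\B$. For every preimage $\nu\in\Psi$ with $A\nu=f$, the definition of $A$ together with the duality bound gives
\begin{equation}
    |f(x)| = |\braket{\psi(x)}{\nu}| \leq \|\psi(x)\|_{\Psi^*}\,\|\nu\|_{\Psi}.
\end{equation}
Since $\psi(x)\in\Psi^*$, the factor $\|\psi(x)\|_{\Psi^*}$ is finite and independent of $\nu$; taking the infimum over all preimages of $f$ and invoking the quotient-norm identity yields $|f(x)|\leq \|\psi(x)\|_{\Psi^*}\,\|f\|_{\B}$. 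Thus $C_x := \|\psi(x)\|_{\Psi^*}$ witnesses that $\B$ is reproducing in the sense of Definition~\ref{def:rkbs}.

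For the forward direction I would build the canonical feature space. Suppose $\B$ is an RKBS. As noted directly after Definition~\ref{def:rkbs}, the point-evaluation functional $K_x$ lies in $\B^*$ for each $x\in X$. Set $\Psi := \B$, so that $\Psi^* = \B^*$, and define the feature map $\psi(x) := K_x \in \Psi^*$. The associated operator then acts by $(A\nu)(x) = \braket{\psi(x)}{\nu} = \braket{K_x}{\nu} = \nu(x)$, i.e. $A = \mathrm{id}_{\B}$ regarded as a map from $\Psi$ into functions on $X$. Because elements of $\B$ are genuine functions determined by their values on $X$ (the standing assumption on $\B$), $A$ is injective, so $\mathcal{N}(A) = \{0\}$ and $\Psi/\mathcal{N}(A) = \B$ isometrically; the infimal-norm formula collapses to $\inf_{f=A\nu}\|\nu\|_{\Psi} = \|f\|_{\B}$, completing the argument.

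The main obstacle is conceptual rather than technical: both implications are short, and the real content is recognizing that ``bounded point evaluation'' is exactly equivalent to admitting a feature map into the dual of some Banach space. The one point that genuinely requires care is that $\psi$ must land in $\Psi^*$ rather than in $\Psi$, which is precisely what the reproducing property supplies via $K_x\in\B^*$; and that injectivity of $A$ relies on $\B$ being a space of true functions rather than equivalence classes. I would also flag that the representation is highly non-unique — the choice $\Psi=\B$ is merely the most economical one, and any Banach space mapping onto $\B$ through a compatible bounded feature map serves equally well. This flexibility in the choice of $\Psi$ is exactly what the later chaining construction exploits.
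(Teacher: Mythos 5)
Your proof is correct; note that the paper does not prove this statement itself but cites it as Proposition 3.3 of \citet{bartolucci_understanding_2023}, and your argument --- the duality estimate $|f(x)|\leq\|\psi(x)\|_{\Psi^*}\|f\|_{\B}$ for one direction and the tautological choice $\Psi=\B$, $\psi(x)=K_x$ for the other --- is exactly the standard proof given in that reference. No gaps.
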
 
Here, the feature map $\psi$ effectively selects which elements of the dual space are defined to be the point evaluation functionals $K_x$. 

Similar to an RKHS, we can define a reproducing kernel for an RKBS. However, the lack of an inner product structure complicates a straightforward generalisation of such a kernel. Instead, one needs to carefully consider the dual structure of the Banach space. In this work, we will work with a dual pair of Banach spaces to avoid technical difficulties with non-reflexive Banach spaces in further chapters.
\begin{definition}
A dual pair of Banach spaces $\B$, $\B^{\di}$ is defined as a pair of Banach spaces together with continuous bilinear map (pairing) $\langle \cdot|\cdot \rangle: \B^\di\times \B \to \R$ with the bound 
\begin{equation}\label{eq:pairing_bound}
|\langle g,f \rangle| \leq \|f\|_{\B} \|g\|_{\B^\di}
\end{equation}
for all $f\in \B$, $g \in \B^\di$ and such that the pairing is non-degenerate, i.e.
\begin{equation}\label{eq:pairing_no_deg}
\begin{split}
    &\langle g,f \rangle =0 \quad \forall f\in \B\,\,  \implies g=0 \\
    &\langle g,f \rangle = 0 \quad\forall g\in \B^\di \implies f=0
\end{split}
\end{equation}
\end{definition}

If both of these spaces are an RKBS, each with their own domain, we can define a reproducing kernel on these domains, if they contain the others' evaluation functionals.

\begin{definition}\label{def:rkbs2}
Let $\B$, $\B^{\di}$ be a dual pair of Banach spaces with the pairing $\langle \cdot|\cdot \rangle$. Let $\B$ be an RKBS with domain $X$ and $\B^\di$ an RKBS with domain $\Omega$. 

If there exists a function $K:X \times \Omega \mapsto \mathbb{R}$ such that $K(x,\cdot)\in \B^{\di}$ for all $x\in X$ and $K(\cdot, w)\in \B$ for all $w\in \Omega$ and 
\begin{equation}\label{eq:kernel}
\begin{split}
    f(x) &= \langle K(x,\cdot)| f\rangle\\
    g(w) &= \langle g| K(\cdot, w)\rangle 
\end{split}
\end{equation}
for all $f\in \B, g\in \B^\di$ and $x\in X, w\in \Omega$, then we call $K$ the reproducing kernel of the RKBS pair $\B$, $\B^{\di}$.
\end{definition}

Compared to RKHS, this reproducing kernel $K$ is not symmetric or positive definite but has related properties. If we interchange both $\B$ and $\B^\di$, and $X$ and $\Omega$, the function $K^*(w,x)=K(x,w)$ is the reproducing kernel of the interchanged pair, which gives a related notion of symmetry. Note that in the Hilbert setting $B\cong B^\di$ and $X=\Omega$, which makes $K$ a symmetric function. 

As an inner product is positive definite, an RKHS kernel must be as well. In the RKBS case, we only have a bounded bi-linear pairing, so the kernel needs only to be independently bounded, i.e. there exist functions $C_X:X\to \R, C_\Omega:\Omega \to \R$, such that $|K(x,w)|\leq C_X(x) C_\Omega(w)$ for all $x\in X, w\in \Omega$.

We can formulate a similar theorem as \citep{aronszajn_theory_1950}, that each symmetric, positive definite kernel defines an RKHS. For an RKBS pair the kernel is unique and independently-bounded, and every independently-bounded kernel defines a pair of RKBSs.

\begin{theorem}\label{thm:kernel}
Let $\B$, $\B^{\di}$ be an RKBS pair. The reproducing kernel $K$ is unique, independently bounded and given by $K(x,w)= \langle K_x | K_w\rangle$ for all $x\in X, w\in \Omega$. 

Conversely, given some sets $X,\Omega$, and an independently bounded kernel $K: X\times \Omega \to \R$, there exists an RKBS pair $\B$, $\B^{\di}$ with domains $X$ and $\Omega$ respectively, and $K$ as the reproducing kernel.
\end{theorem}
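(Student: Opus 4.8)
The plan is to treat the two directions separately, beginning with the forward claim that the kernel of a given RKBS pair is unique, independently bounded, and satisfies $K(x,w)=\langle K_x|K_w\rangle$. Writing $K_x := K(x,\cdot)\in\B^\di$ and $K_w := K(\cdot,w)\in\B$ for the evaluation functionals supplied by \eqref{eq:kernel}, I would obtain the formula by feeding one kernel section into the reproducing identity for the other: applying $f(x)=\langle K_x|f\rangle$ with $f=K_w$ gives $K(x,w)=K_w(x)=\langle K_x|K_w\rangle$, and the symmetric substitution into $g(w)=\langle g|K_w\rangle$ confirms it. Independent boundedness is then immediate from the pairing bound \eqref{eq:pairing_bound}, since $|K(x,w)|=|\langle K_x|K_w\rangle|\le \|K_w\|_{\B}\,\|K_x\|_{\B^\di}$, so that $C_X(x)=\|K_x\|_{\B^\di}$ and $C_\Omega(w)=\|K_w\|_{\B}$ work and are finite. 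Uniqueness follows from non-degeneracy \eqref{eq:pairing_no_deg}: if $\tilde K$ is another kernel then $\langle K_x-\tilde K_x\,|\,f\rangle=f(x)-f(x)=0$ for every $f\in\B$, forcing $K_x=\tilde K_x$ in $\B^\di$ and hence $K(x,w)=\tilde K(x,w)$ pointwise.

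For the converse I would build the pair symmetrically out of the kernel itself, using the feature-space characterisation of Theorem~\ref{thm:rkbs_quotient}. Assuming without loss of generality that the bounding functions satisfy $C_X,C_\Omega>0$ and are finite, I would take as feature spaces the weighted $\ell^1$-spaces $\Psi_X$ and $\Psi_\Omega$ obtained by completing the finitely supported signed measures on $\Omega$, respectively $X$, under $\|\nu\|_{\Psi_X}=\int C_\Omega\,d|\nu|$ and $\|\mu\|_{\Psi_\Omega}=\int C_X\,d|\mu|$; these are Banach spaces containing all Dirac masses. The feature maps are integral operators against $K$: set $\psi(x)=[\nu\mapsto\int K(x,w)\,d\nu(w)]$, which lies in $\Psi_X^*$ with $\|\psi(x)\|\le C_X(x)$ by independent boundedness, and symmetrically $\phi(w)\in\Psi_\Omega^*$. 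Theorem~\ref{thm:rkbs_quotient} then yields RKBS $\B=\Psi_X/\mathcal{N}(A_X)$ on $X$ and $\B^\di=\Psi_\Omega/\mathcal{N}(A_\Omega)$ on $\Omega$, where $(A_X\nu)(x)=\int K(x,w)\,d\nu(w)$ and $(A_\Omega\mu)(w)=\int K(x,w)\,d\mu(x)$.

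The pairing would be defined by the double integral
\begin{equation}
    \langle g\,|\,f\rangle := \iint K(x,w)\,d\nu(w)\,d\mu(x) = \int f\,d\mu = \int g\,d\nu ,
\end{equation}
for representatives $f=A_X\nu$ and $g=A_\Omega\mu$, the three expressions agreeing by Fubini, which is justified since $\iint|K|\,d|\nu|\,d|\mu|\le\|\nu\|_{\Psi_X}\|\mu\|_{\Psi_\Omega}<\infty$. The same estimate yields the bound $|\langle g|f\rangle|\le\|f\|_{\B}\|g\|_{\B^\di}$ after taking the infimum over representatives. Choosing $\mu=\delta_x$ shows $K(x,\cdot)=A_\Omega\delta_x\in\B^\di$ and $\langle K_x|f\rangle=\int f\,d\delta_x=f(x)$, and symmetrically with $\delta_w$, so both reproducing identities \eqref{eq:kernel} hold and $K$ is the kernel; non-degeneracy \eqref{eq:pairing_no_deg} then follows by testing against $f=K_w$ and $g=K_x$ and invoking these identities.

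The main obstacle is not any single estimate but the bookkeeping needed to make the pairing descend to the quotients while keeping both completed spaces genuine spaces of functions rather than abstract completions. Concretely, I must check that $\langle g|f\rangle$ depends only on the functions $f$ and $g$ and not on the representing measures, i.e. that it annihilates $\mathcal{N}(A_X)$ in the first slot and $\mathcal{N}(A_\Omega)$ in the second, which is precisely where the two alternative forms $\int f\,d\mu$ and $\int g\,d\nu$ of the double integral are used. The quotient construction of Theorem~\ref{thm:rkbs_quotient} is what guarantees that $\B$ and $\B^\di$ are honest RKBS over $X$ and $\Omega$; verifying that the pairing remains non-degenerate after passing to these quotients, together with the mild regularity assumptions on $C_X,C_\Omega$, is the delicate part of the argument.
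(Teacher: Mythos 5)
Your forward direction is essentially the paper's argument: substitute $f=K_w$ into the reproducing identity to get $K(x,w)=\langle K_x|K_w\rangle$, read off independent boundedness from \eqref{eq:pairing_bound}, and get uniqueness from non-degeneracy (your version of the uniqueness step, comparing $K_x-\tilde K_x$ against all $f\in\B$, is in fact slightly more explicit than the paper's). The converse, however, takes a genuinely different route. The paper builds $\V=\spn\set{K(\cdot,w)}$ and $\V^\di=\spn\set{K(x,\cdot)}$, puts the weighted supremum norm $\sup_x |\langle K(x,\cdot)|f\rangle|/C_X(x)$ on $\V$ and the induced operator norm on $\V^\di$, completes both, and then invokes Hahn--Banach to extend the bilinear pairing from the spans to the completions. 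You instead construct both spaces symmetrically as quotients of weighted $\ell^1$-type feature spaces of (completions of) finitely supported measures, with $A_X$ and $A_\Omega$ given by integration against $K$, and define the pairing globally by the double integral $\iint K\,d\nu\,d\mu$. This buys you an explicit pairing on all of $\B\times\B^\di$ with no Hahn--Banach step, and the identity $\iint K\,d\nu\,d\mu=\int f\,d\mu=\int g\,d\nu$ cleanly disposes of the well-definedness on the quotients that you correctly identify as the delicate point; it also anticipates the integral RKBS structure of Definition~\ref{def:int_type}. What it gives up is the $p$--$q$ duality flavour of the paper's construction (your pairing is of $1$--$1$ type, bounded only because $|K(x,w)|\leq C_X(x)C_\Omega(w)$ is a rank-one-type bound), and it requires the mild normalisation $C_X,C_\Omega>0$, which is indeed without loss of generality. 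Both constructions are valid, consistent with the paper's own remark that the RKBS pair realising a given kernel is not unique.
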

\begin{proof}
Let $\B$, $\B^{\di}$ be an RKBS pair. By Definition~\ref{def:rkbs2} we get that $K_x=K(x,\cdot)\in \B^\di$ and $K_w=K(\cdot, w)\in \B$ for all $x\in X, w\in \Omega$. It follows that we can apply \eqref{eq:kernel} to $f=K_w$ to get that
\begin{equation}
    K(x,w)=K_w(x) = \braket{K(x,\cdot)}{K_w} = \braket{K_x}{K_w}
\end{equation}
Hence, the $K$ is uniquely defined by the evaluation functionals $K_x, K_w$. Finally, the boundedness of the pairing implies that $|K(x,w)|\leq \|K_x\|_{\B} \|K_w\|_{\B^\di}$.

Conversely, let $X,\Omega$ be sets and let $K: X\times \Omega \to \R$ be independently bounded. We define the vector spaces of real functions 
\begin{equation}
    \begin{split}
        \mathcal{V}&:= \text{span}\set{K(\cdot,w): X\to \R \given w\in \Omega}\\
        \mathcal{V}^\di&:= \text{span}\set{K(x,\cdot): \Omega \to \R \given x\in X}
    \end{split}
\end{equation}
and define the bi-linear map between them as
\begin{equation}
    \braket{\sum_j a_j K(x_j,\cdot)}{\sum_i c_i K(\cdot,w_i)} := \sum_{i,j}a_jc_i K(x_j,w_i)
\end{equation}

Choose as norm for $\mathcal{V}$ the weighted supremum norm
\begin{equation}
    \norm{f}_{\mathcal{V}} := \sup_{x\in X}\frac{\abs{\braket{K(x,\cdot)}{f}}}{C_X(x)}
\end{equation}
With this norm $K(x,\cdot)$ becomes a bounded linear functional, i.e. for each $x\in X$ 
\begin{equation}
    \abs{\braket{K(x,\cdot)}{f}} \leq C_X(x)\norm{f}_{\mathcal{V}}
\end{equation}
for all $f\in \mathcal{V}$. Let $\B$ be the completion of $\mathcal{V}$ with respect to its norm, i.e.
\begin{equation}
   \B := \overline{\mathcal{V}}^{\norm{\cdot}_{\mathcal{V}}}
\end{equation}
It is immediate that $f(x)=0$ for all $x\in X$ implies that $f=0$. Hence, this is an RKBS with domain $X$.

Define now the norm on $\mathcal{V}^\di$ as 
\begin{equation}
    \norm{g}_{\mathcal{V}^\di} := \sup_{\norm{f}_\B\leq 1}\abs{\braket{g}{f}}
\end{equation}
where the pairing here is the extension of the pairing between $\V,\V^\di$ to $\B,\V^\di$. With this norm, the evaluation functionals $K(\cdot,\omega)$ are bounded linear functionals on $\mathcal{V}^\di$, since
\begin{equation}
    \abs{\braket{g}{K(\cdot,\omega)}} \leq \norm{g}_{\mathcal{V}^\di}\norm{K(\cdot,\omega)}_\B < \infty
\end{equation}
holds for all $g\in \mathcal{V}^\di$ and $\omega\in \Omega$. Similarly to $\B$, we define $\B^\di$ as the completion of $\mathcal{V}^\di$ with respect to its norm, i.e.
\begin{equation}
    \B^\di := \overline{\mathcal{V}^\di}^{\norm{\cdot}_{\mathcal{V}^\di}}
\end{equation}
By construction, this is a Banach space of functions with domain $\Omega$. 

By Hahn-Banach, the pairing for $\mathcal{V},\mathcal{V}^\di$ can be extended to a pairing for $\B,\B^\di$ which satisfies \eqref{eq:pairing_bound}. Moreover, if $g\in \B^\di$, then 
\begin{equation}
    \forall f\in \B\backslash\set{0}:\;\braket{g}{f} = 0 \implies \forall \omega\in\Omega:\; g(\omega) = \braket{g}{K(\cdot,\omega)} =0 \implies g=0
\end{equation}
which follows from $\B^\di$ being a Banach space of functions. The proof for $f\in\B$ follows similarly. Combined these show that the pairing is non-degenerate, and that, therefore, $\B,\B^\di$ is an RKBS pair.
\end{proof}

Note that the constructed RKBS pair $\B$, $\B^\di$ is not unique, due to the choice of the norm on $\B$. We could instead of the supremum norm have chosen a $p$-type norm for $\B$ and a $q$-type norm for $\B^\di$, where $\tfrac{1}{p}+\tfrac{1}{q}=1$.

\subsection{Constructing a chain Reproducing Kernel Banach Space}\label{sec:chaining_rkbs}
We start with an RKBS pair $\B^1, \B^{1\di}$ with domains $X$ and $\Omega^1$ respectively, and kernel $K^1$, which we call the \textit{initial RKBS pair}. To construct a \textit{chain RKBS pair}, the RKBS pair $\B^2$, $\B^{2\di}$ with domains $X$ and $\Omega^2$, we need to first introduce a different pair of RKBS, which we call a \textit{link RKBS pair}, which we denote with a tilde.

Let $\tilde{\B}^2$, $\tilde{\B}^{2\di}$ be an RKBS pair with domains $\B^{1\di}$ and $\Omega^2$, and has a kernel $\tilde{K}^2$. Note that $\tilde{\B}^2$ contains functions of $\B^{1\di}$, which itself are functions of $\Omega^1$. The kernels reproduce the functions in the following way
\begin{equation}
\begin{split}
    f^1(x) &= \braket{K^1_x}{f^1}  \qquad \,\, g^1(w^1) = \braket{g^1}{K^1_{w^1}}  \\
    h^2(g^1) &= \braket{\tilde{K}^2_{g^1}}{h^2} \qquad q^2(w^2) = \braket{q^2}{\tilde{K}^2_{w^2}}
\end{split}
\end{equation}
for all $f^1\in \B^1, g^1 \in \B^{1\di}, h^2 \in \tilde{\B}^2, q^2\in \tilde{\B}^{2\di}$, and $x\in X, w^1 \in \Omega^1, w^2 \in \Omega^2$. Note that if $\tilde{\B}^{2}$ contains only linear functions, the space is basically just $\B^1$ and the rest of the construction becomes trivial. Therefore, in later sections, we choose $\tilde{\B}^{2}$ such that it contains nonlinear functions, which is relevant when we apply this to neural networks. Also, for neural networks the set $\Omega^2$ is chosen as $\B^1$, which makes the link spaces more symmetrical, but this is not necessary for the general construction.

By composing the kernels in a certain way, we create a new RKBS $\B^2$ with domain $X$. To be precise, we define $\B^2$ using the quotient space of Theorem~\ref{thm:rkbs_quotient} with the feature map $\psi:X\rightarrow \tilde{\B}^{2\di}$:
\begin{equation}\label{eq:feature_map}
    \psi(x):= \tilde{K}^2_{K^1_x}
\end{equation}
One way to view this construction is that the kernel $K^1$, selects which functionals of $\tilde{B}^2$ become the evaluation functionals of $\B^2$. The feature map is well-defined, because $K^1_x$ is an element of $\B^{1\di}$, by definition of the adjoint pair of RKBS. This defines the linear map $A$, which maps elements $h\in \tilde{\B}^2$ to functions of $X$
\begin{equation}
    (Ah)(x):= \braket{\psi(x)}{h}
\end{equation}

We formally define the Banach space $\B^2$ as the image of the map $A$, which is a subspace of the vector space of functions with domain $X$. The norm of $\B^2$ is defined by the norm of the quotient space $\tilde{\B}^2$ over the nullspace $\N(A)$.
\begin{equation}
\begin{split}
\B^2 &:= A(\tilde{\B}^2)\simeq\tilde{\B}^2/\N(A)\\
\|f^2\|_{\B^2} &:= \inf_{Ah=f^2} \|h\|_{\tilde{\B}^2}\\
f^2(x) &:= (Ah)(x) = \braket{\tilde{K}^2_{K^1_x}}{h} = h(K^1_x)
\end{split}
\end{equation}
for all $f^2\in \B^2$, $x\in X$ and for any $h\in \tilde\B^1$ such that $Ah=f^2$. The nullspace of $A$ is given by
\begin{equation}\label{eq:nullspace_A}
    \N(A)=\set{h\in \tilde{\B}^2 \given h(K^1_x)=0 \quad \forall x\in X}
\end{equation}

We define the space $\B^{2\di}$ as the subspace of $\tilde{\B}^{2\di}$ which annihilates $\mathcal{N}(A)$, i.e.
\begin{equation}
    \B^{2\di} := \set{q\in \tilde{\B}^{2\di} \given \braket{q}{h}=0 \quad\forall h\in \mathcal{N}(A)}
\end{equation}
The subspace embedding is given by the identity map $A^\di: \B^{2\di} \to \tilde{\B}^{2\di}$, and the norm as well as function evaluation on $\B^{2\di}$ are inherited from $\tilde{\B}^{2\di}$. 
\begin{equation}
    \begin{split}
        \|g^2\|_{\B^{2\di}} &:= \|A^\di g^2\|_{\tilde{\B}^{2\di}}\\
        g^2(w^2) &:= (A^{\di} g^2)(w^2)
    \end{split}
\end{equation}
for all $g^2\in \B^{2\di}$ and $w^2\in \Omega^2$. 

The pairing between $\B^2$ and $\B^{2\di}$ is also inherited
\begin{equation}\label{eq:chain_pairing}
    \braket{g^2}{f^2} := \braket{A^\di g^2}{h^2}
\end{equation}
for all $f^2\in \B^2, g^2\in \B^{2\di}$ and some $h^2\in \tilde{\B}^2$ such that $f^2=Ah^2$.

Now that we have defined the chain spaces $\B^2$ and $\B^{2\di}$, we show that this construction indeed generates a new RKBS pair.

\begin{theorem}[RKBS Consistency]\label{thm:chain_well-posed}
The spaces $\B^2$, $\B^{2\di}$ form an RKBS pair with kernel $K^2(x,w) := \tilde{K}^2(K^1_x,w)$ for all $x\in X, w\in \Omega^2$.   
\end{theorem}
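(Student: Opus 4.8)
The plan is to verify the defining conditions of an RKBS pair (Definition~\ref{def:rkbs2}) one at a time: that $\B^2$ reproduces on $X$, that $\B^{2\di}$ reproduces on $\Omega^2$, that the inherited pairing \eqref{eq:chain_pairing} is well-defined, bounded and non-degenerate, and that $K^2(x,w)=\tilde{K}^2(K^1_x,w)$ is the reproducing kernel. For the first condition I would simply apply Theorem~\ref{thm:rkbs_quotient} with feature space $\Psi=\tilde{\B}^2$ and the feature map $\psi$ of \eqref{eq:feature_map}. The one point to check is that $\psi$ lands in $(\tilde{\B}^2)^*$: since $K^1_x\in\B^{1\di}$, which is the domain of $\tilde{\B}^2$, we have $\psi(x)=\tilde{K}^2_{K^1_x}\in\tilde{\B}^{2\di}$, and the link pairing embeds $\tilde{\B}^{2\di}$ into $(\tilde{\B}^2)^*$. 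As $\B^2$ is by construction $A(\tilde{\B}^2)\simeq\tilde{\B}^2/\N(A)$ with the quotient norm, Theorem~\ref{thm:rkbs_quotient} gives that point evaluation is bounded; I would also note that $\N(A)=\bigcap_{x\in X}\ker\psi(x)$ is closed, so the quotient is a genuine Banach space of functions on $X$.

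Next, $\B^{2\di}$ is the annihilator of $\N(A)$ inside $\tilde{\B}^{2\di}$, hence a closed and therefore Banach subspace, whose norm and function evaluation on $\Omega^2$ are inherited from $\tilde{\B}^{2\di}$. Since $\tilde{\B}^{2\di}$ reproduces on $\Omega^2$, restricting its bounded point-evaluation functionals to the subspace keeps them bounded, so $\B^{2\di}$ is an RKBS on $\Omega^2$. Well-definedness of \eqref{eq:chain_pairing} is exactly where the annihilator definition earns its keep: if $Ah_1=Ah_2=f^2$ then $h_1-h_2\in\N(A)$, and because $A^\di g^2\in\tilde{\B}^{2\di}$ annihilates $\N(A)$ the quantity $\braket{A^\di g^2}{h}$ does not depend on the chosen representative $h$. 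Boundedness then follows by applying the link pairing bound to $\braket{A^\di g^2}{h}$, using $\|A^\di g^2\|_{\tilde{\B}^{2\di}}=\|g^2\|_{\B^{2\di}}$, and taking the infimum over all $h$ with $Ah=f^2$ to recover $\|f^2\|_{\B^2}$.

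The heart of the proof is identifying the kernel. For the membership statements, $\psi(x)=\tilde{K}^2_{K^1_x}=K^2(x,\cdot)$ lies in $\B^{2\di}$ because $\braket{\psi(x)}{h}=h(K^1_x)=0$ for every $h\in\N(A)$ by \eqref{eq:nullspace_A}; and writing $\tilde{K}^2_w=\tilde{K}^2(\cdot,w)\in\tilde{\B}^2$, the reproducing property of the link kernel from Theorem~\ref{thm:kernel} gives $(A\tilde{K}^2_w)(x)=\braket{\tilde{K}^2_{K^1_x}}{\tilde{K}^2_w}=\tilde{K}^2(K^1_x,w)=K^2(x,w)$, so $K^2(\cdot,w)=A\tilde{K}^2_w\in\B^2$. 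The two reproducing identities then follow directly from the definitions: $\braket{K^2(x,\cdot)}{f^2}=\braket{\psi(x)}{h}=(Ah)(x)=f^2(x)$, and $\braket{g^2}{K^2(\cdot,w)}=\braket{A^\di g^2}{\tilde{K}^2_w}=(A^\di g^2)(w)=g^2(w)$.

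Finally, non-degeneracy falls out of these reproducing identities together with the fact that $\B^2$ and $\B^{2\di}$ are honest function spaces: if $\braket{g^2}{f^2}=0$ for all $f^2$, choosing $f^2=K^2(\cdot,w)$ forces $g^2(w)=0$ for all $w$, hence $g^2=0$, and the symmetric choice handles the other direction. I expect the main obstacle to be the bookkeeping around the pairing rather than any single hard estimate: one must confirm that the quotient--annihilator duality $(\tilde{\B}^2/\N(A))'\cong\N(A)^\perp$ behaves correctly in the dual-pair setting rather than with literal Banach duals, in particular that $A^\di$ is the isometric inclusion the construction claims it to be, so that the well-definedness, boundedness, and kernel-reproducing steps remain mutually consistent.
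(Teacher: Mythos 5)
Your proposal is correct and follows essentially the same route as the paper's proof: $\B^2$ is an RKBS via Theorem~\ref{thm:rkbs_quotient}, $\B^{2\di}$ via the inherited subspace norm, boundedness of the pairing by the link bound plus an infimum over representatives $h$ with $Ah=f^2$, and the kernel identities by the same two computations ($K^2_x=\tilde K^2_{K^1_x}$ annihilates $\N(A)$, and $K^2_w=A\tilde K^2_w$). Your explicit check that the pairing is well-defined on representatives is implicit in the paper but worth stating. The one genuine divergence is non-degeneracy in the direction ``$\braket{g}{f}=0$ for all $g\in\B^{2\di}$ implies $f=0$'': the paper argues that $h\in\N(A)$ because it is annihilated by all of $\B^{2\di}$, which is a bipolar-type claim that the annihilator of $\N(A)$ separates points of the quotient and is not justified there; you instead test against the specific functionals $K^2_x\in\B^{2\di}$ to get $f(x)=0$ for all $x\in X$ and conclude from $\B^2$ being a Banach space of functions. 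Your version is more self-contained and actually patches the weakest step of the paper's argument, at the mild cost of having to establish the kernel membership statements before non-degeneracy (which is harmless, since those are pure computations with the pairing). Your closing worry about the quotient--annihilator duality is resolved by the construction itself: $A^\di$ is the identity inclusion and the norm on $\B^{2\di}$ is defined to make it isometric.
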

\begin{proof}
By Theorem~\ref{thm:rkbs_quotient}, we immediately get that $\B^2$ is an RKBS. Since $\tilde{\B}^{2\di}$ is an RKBS and $\B^{2\di}$ is equipped with the subspace norm, $\B^{2\di}$ must also be an RKBS by Definition~\ref{def:rkbs}.

The pairing \eqref{eq:chain_pairing} is bounded, as
\begin{equation}
    \braket{g}{f}| = |\braket{A^\di g}{h}| \leq \|A^\di g\|_{\tilde{\B}^{2\di}}\|h\|_{\tilde{\B}} = \|g\|_{\B^{2\di}}\|h\|_{\tilde{\B}^2}
\end{equation}
for all $f\in \B^2,g\in \B^{2\di}$ and all $h\in \tilde{\B}^2$ such that $f=Ah$. Taking the infimum over all such $h$, we get the required bound.

Next, to show that the pairing is non-degenerate: First, if $f\in \B^2$ such that $\braket{g}{f}=0$ for all $g\in \B^{2\di}$, then $\braket{A^\di g}{h}=0$ for all $g\in \B^{2\di}$ and $h\in \tilde{\B}^2$ such that $f=Ah$. From the definition of $\B^{2\di}$ it follows that $h\in \N(A)$, and thus $f=A h = 0$. Second, if $g\in \B^{2\di}$ such that $\braket{g}{f} = 0$ for all $f\in \B^2$, then $\braket{A^\di g}{h}=0$ for all $h\in \tilde{\B}^2$. Hence, by the non-degeneracy of this pairing, we get that $A^\di g=0$ and as the embedding is injective, $g=0$. Thus, we conclude that $\B^2,\B^{2\di}$ form a dual pair of Banach spaces.

Let $x\in X$. For the evaluation functional $K^2_x := K^2(x, \cdot)$, we get for all $h\in \mathcal{N}(A)$ that $\braket{\tilde{K}^2(K^1_x,\cdot)}{h^2} = h^2(K^1_x)=0$, as $\tilde{K}^2$ is a kernel. Hence $K^2_x\in \B^{2\di}$ and 
\begin{equation}
    \braket{K^2_x}{f} = \braket{A^\di K^2_x}{h} = \braket{\tilde{K}^2(K^1_x,\cdot)}{h} = f(x)
\end{equation}
for all $f\in \B^2$ and all $h\in\tilde{\B}^2$ such that $f=Ah$.

Let $w\in \Omega^2$. For the evaluation functional $K^2_w := K^2(\cdot, w)= A(\tilde{K}^2(\cdot,w))$. Hence, $K^2_w\in \B^2$ and 
\begin{equation}
    \braket{g}{K^2_w} = \braket{A^\di g}{\tilde{K}^2(\cdot,w)} = (A^\di g)(w)= g(w)
\end{equation}
for all $g\in \B^{2\di}$.

Thus, we conclude that $\B^2$, $\B^{2\di}$ form a pair of RKBS with kernel $K^2$.
\end{proof}

By defining new link RKBS pairs $\tilde{\B}^\ell$, $\tilde{\B}^{\ell \di}$ for $\ell = 2,\hdots, L$, we can build longer chain RKBS pairs $\B^L, \B^{L \di}$ for any length $L\in \mathbb{N}$. Note that this procedure is not symmetric: $\B^L$ still has domain $X$, while $\B^{L \di}$ has domain $\Omega^L$. This makes sense from a standpoint of neural networks as if we add layers, we get more weights, while the input space stays the same.

For Hilbert spaces, the construction takes the following simplified form. We start with an initial RKHS $\H^1$ with domain $X$ and kernel $K^1:X\times X\to \R$, and a link RKHS $\tilde{\H}^2$ with domain $\H^1$ with kernel $\tilde{K}^2: \H^1\times\H^1 \to \R$. The procedure above then leads to a chain RKHS $\H^2$ with domain $X$ and kernel $K^2(x,x')= \tilde{K}^2(K^1(x,\cdot),K^1(\cdot,x'))$.

\begin{figure}
    \centering
    \includegraphics[width=0.9\linewidth]{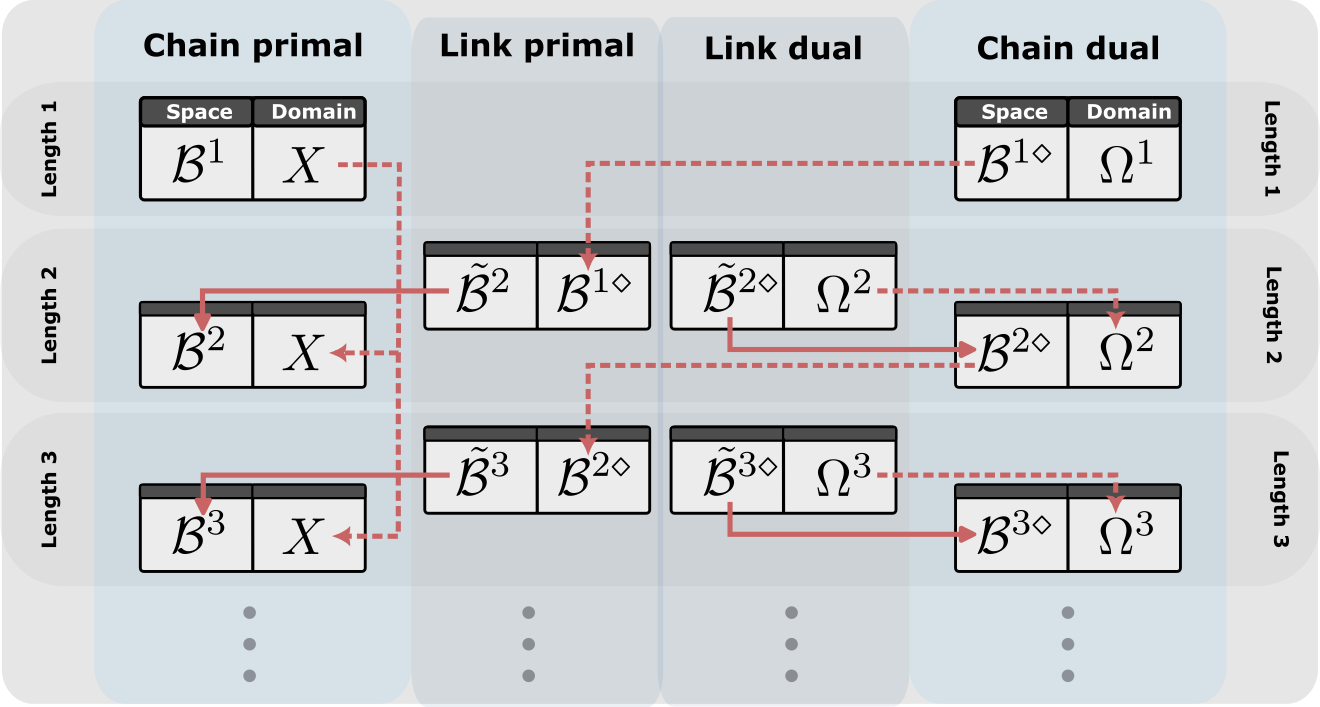}
    \caption{Schematic representation of the construction of a kernel chain. The dependencies of the spaces have been indicated by arrows: a straight line when there is a map between the chain and link, and a dashed line for a domain identification.}
    \label{fig:kernel_chain}
\end{figure}

\section{Chain RKBS for Deep Neural Networks}
In the previous section, we showed that given a kernel, there exists a RKBS pair with that kernel, which we can use to build a chain RKBS. In this section, we will make this construction more concrete by choosing specific spaces such that it correspond to deep neural networks. 

First, we restrict ourselves to integral chain RKBS, or integral cRKBS for short. Afterwards, we will choose the kernel such that it corresponds to an element-wise activation function and an affine combination, which is common in neural networks. We will show that for the ReLU activation function, our description is equivalent to the generalised Barron spaces of \citet{e_banach_2020}. 

\subsection{Integral cRKBS}
Integral RKBSs are RKBSs where the functions with domain $X$ are described by integrating the kernel with respect to some measure over $\Omega$ (or vice versa).  

To be precise, we start with completely regular Hausdorff sets $X$ and $\Omega$ and a bounded, measurable function $\varphi:X\times \Omega \to \R$, where measurable is always understood to be Borel measurable. We define $\M(X),\M(\Omega)$ to be the Banach spaces of Radon measures of $X$, $\Omega$ respectively with the total variation norm, i.e. regular signed Borel measures with finite total variation. The Dirac or point measures are denoted by $\delta_x \in \M(X), \delta_w\in \M(\Omega)$, for $x\in X$ and $w\in \Omega$ respectively. We then define the \textit{integral RKBS} pair as follows:

\begin{definition}\label{def:int_type}
Let $X,\Omega$ be sets with a completely regular Hausdorff topology, and let $\varphi:X\times \Omega \to \R$ be bounded and measurable. An integral RKBS pair $\B, \B^{\di}$ is defined as follows: For all $f\in\B$ there exists a $\mu\in \M(\Omega)$ such that
\begin{equation}\label{eq:int_type-primal}
\begin{split}
    f(x) &:= (A_{\Omega\to X}\mu)(x) := \int_\Omega \varphi(x,w) d\mu(w)\\
    \|f\|_\B &:= \inf_{f=A_{\Omega\to X}\mu} |\mu|(\Omega)
\end{split}
\end{equation}
for all $x\in X$, for all $g\in \B^\di$ there exists a $\rho\in \M(X)$ such that 
\begin{equation}\label{eq:int_type-dual}
\begin{split}
    g(w) &:= (A_{X\to \Omega}\rho)(w) := \int_X \varphi(x,w) d\rho(x)\\
    \|g\|_{\B^\di} &:= \sup_{w\in\Omega} |g(w)|
\end{split}
\end{equation}
for all $w\in \Omega$, and the pairing between $f\in\B$ and $g\in\B^\di$ is given by
\begin{equation}\label{eq:int_type-pairing}
    \braket{g}{f} := \int_{X\times \Omega} \varphi(x,w) d(\rho \times \mu)(x,w) 
\end{equation}
for some $\mu\in\M(\Omega),\rho\in\M(X)$ such that $f=A_{\Omega\to X}\mu$ and $g=A_{X\to \Omega}\rho$.
\end{definition}
Note that $\B$ is always defined to have a '1-norm' and $\B^\di$ an '$\infty$-norm'. In a previous paper \citep{spek_duality_2023}, we have already shown that $\B,\B^\di$ satisfy all the conditions to be an RKBS pair, if $X\subseteq \R^d, \Omega \subseteq \R^n$ and the kernel $\varphi$ is continuous. The proof for this more general setting is analogous, so we will only give a short sketch of the proof.
\begin{theorem}\label{thm:int_type}
The spaces $\B$, $\B^\di$ with the pairing of Definition~\ref{def:int_type} form a pair of RKBS with kernel $\varphi$.
\end{theorem}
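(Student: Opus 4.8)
The plan is to check the defining properties of an RKBS pair from Definition~\ref{def:rkbs2} in turn: that $\B$ and $\B^\di$ are each reproducing Banach spaces of functions, that they form a dual pair under the pairing \eqref{eq:int_type-pairing}, and that $\varphi$ reproduces as in \eqref{eq:kernel}. Conceptually, the integral pair is the $p=1$, $q=\infty$ realisation of the general construction behind Theorem~\ref{thm:kernel} (cf.\ the remark following that proof), with $\M(\Omega)$ playing the role of the feature space; since $\varphi$ is bounded it is independently bounded, so existence of \emph{some} pair with kernel $\varphi$ is guaranteed, and the real task is to confirm that the explicit integral formulas of Definition~\ref{def:int_type} realise such a pair. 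The only genuinely analytic ingredients will be Fubini's theorem and the completeness of the two spaces.

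First I would establish that $\B$ is an RKBS on $X$ by a direct appeal to Theorem~\ref{thm:rkbs_quotient}. Take the feature space $\Psi=\M(\Omega)$ and the feature map $\psi\colon X\to\M(\Omega)^*$ sending $x$ to the functional $\mu\mapsto\int_\Omega\varphi(x,w)\,d\mu(w)$; because $\varphi$ is bounded and measurable this functional is well defined with norm at most $\sup|\varphi|$, so $\psi(x)\in\M(\Omega)^*$, and the induced map $A=A_{\Omega\to X}$ together with the quotient norm $\inf_{f=A\mu}|\mu|(\Omega)$ are exactly \eqref{eq:int_type-primal}. Hence $\B\simeq\M(\Omega)/\N(A)$ is reproducing, and it is a Banach space of functions because $\N(A)=\bigcap_{x\in X}\ker\psi(x)$ is closed. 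For $\B^\di$, point evaluation at $w$ is bounded with constant $1$ since $|g(w)|\leq\sup_{w'}|g(w')|=\|g\|_{\B^\di}$, so once completeness is in hand $\B^\di$ is an RKBS on $\Omega$ by Definition~\ref{def:rkbs}.

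Next I would verify the dual-pair axioms. For the bound \eqref{eq:pairing_bound}, Fubini's theorem (applicable since $\varphi$ is bounded and the Radon measures are finite) rewrites \eqref{eq:int_type-pairing} as $\braket{g}{f}=\int_X f\,d\rho=\int_\Omega g\,d\mu$; the first form depends only on $(f,\rho)$ and the second only on $(g,\mu)$, which simultaneously shows the pairing is independent of the chosen representing measures and yields $|\braket{g}{f}|=|\int_\Omega g\,d\mu|\leq\|g\|_{\B^\di}\,|\mu|(\Omega)$, whence $|\braket{g}{f}|\leq\|g\|_{\B^\di}\|f\|_\B$ after taking the infimum over $\mu$ with $f=A_{\Omega\to X}\mu$. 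Non-degeneracy \eqref{eq:pairing_no_deg} follows by testing against Dirac measures: $g=A_{X\to\Omega}\delta_x=\varphi(x,\cdot)\in\B^\di$ gives $\braket{g}{f}=f(x)$, so vanishing for all $g$ forces $f\equiv 0$, and symmetrically $f=A_{\Omega\to X}\delta_w=\varphi(\cdot,w)\in\B$ gives $\braket{g}{f}=g(w)$. These same two computations exhibit $\varphi(x,\cdot)\in\B^\di$, $\varphi(\cdot,w)\in\B$ and the reproducing identities \eqref{eq:kernel}, so $\varphi$ is the kernel of the pair.

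The step I expect to be the main obstacle is proving that $\B^\di$ is complete, hence Banach, in the general setting of a completely regular Hausdorff $\Omega$ and a merely bounded measurable $\varphi$, rather than the $\R^d$/continuous setting of \citet{spek_duality_2023}. Unlike $\B$, whose completeness is inherited from the quotient of the Banach space $\M(\Omega)$, the space $\B^\di$ carries the supremum norm while being defined as the image $A_{X\to\Omega}(\M(X))$, and this norm is \emph{not} the quotient norm, so one must show directly that the image is closed among the bounded functions on $\Omega$. The natural route is to take a Cauchy sequence $g_n=A_{X\to\Omega}\rho_n$, extract a weak-$*$ limit $\rho$ of the representing measures via compactness on bounded subsets of $\M(X)$, and identify the uniform limit as $A_{X\to\Omega}\rho$; the delicacy is that $\varphi(\cdot,w)$ is only measurable, so the regularity of the Radon measures must be used carefully to pass the limit inside the integral. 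Everything else being formal, I would present only this completeness argument in detail and otherwise defer to the analogous proof in \citet{spek_duality_2023}.
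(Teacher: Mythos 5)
Your proposal follows essentially the same route as the paper's (deliberately brief) proof: $\B$ is reproducing via Theorem~\ref{thm:rkbs_quotient} with $\Psi=\M(\Omega)$, $\B^\di$ via Definition~\ref{def:rkbs}, the pairing is handled by Fubini, and the evaluation functionals are $A_{X\to\Omega}\delta_x$ and $A_{\Omega\to X}\delta_w$. You correctly single out the completeness of $\B^\di$ under the supremum norm as the one non-formal step; the paper does not argue it either, deferring to \citet[Lemma 14]{spek_duality_2023}, so your sketch is, if anything, slightly more explicit than the published one.
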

\begin{proof}
The space $\B$ is an RKBS by Theorem~\ref{thm:rkbs_quotient} with $\psi(x)=\varphi(x,\cdot)$ and $\Psi=\M(\Omega)$. The space $\B^\di$ is an RKBS directly by Definition~\ref{def:rkbs}. The pairing is well-defined, bounded and non-degenerate due to Fubini's Theorem \citep[Lemma 14]{spek_duality_2023}. The evaluation functionals are given by $K_x = A_{X\to \Omega}\delta_x \in \B^\di, K_w = A_{\Omega\to X}\delta_w \in \B$, and the kernel $K(x,w) = \braket{K_x}{K_w} =  \varphi(x,w)$. 
\end{proof}

Now, to expand our initial RKBS pair $\B,\B^\di$ to an integral cRKBS pair, we can use the procedure of the Section~\ref{sec:chaining_rkbs}. We only need to choose completely regular Hausdorff $\Omega^\ell$ and bounded, measurable kernels $\tilde{\varphi}^\ell: \B^{(\ell-1)\di} \times \Omega^\ell \to \R$ for some $\ell=2,\cdots, L$. We can then define the link RKBS pairs $\tilde{\B}^\ell, \tilde{\B}^{\ell\di}$ to be the integral RKBS pair corresponding to $\tilde{\varphi}$, see Definition~\ref{def:int_type}. Note that here elements of $\tilde{\B}^\ell$ are functions of $\B^{(\ell-1)\di}$, i.e. for each $h\in \tilde{\B}^\ell$ there exists a $\mu\in \M(\Omega^\ell)$
\begin{equation}
    h(g) := (A_{\Omega^\ell \to \B^{(\ell-1)\di}}\mu)(g) := \int_{\Omega^\ell} \tilde{\varphi}(g,w)d\mu(w)
\end{equation}
for all $g\in \B^{(\ell-1)\di}$. 

From these link RKBS pairs $\tilde{\B}^\ell, \tilde{\B}^{\ell\di}$, we can use the construction of Section~\ref{sec:chaining_rkbs} to formulate a cRKBS pair $\B^\ell, \B^{\ell\di}$ with domains $X,\Omega^\ell$. These spaces are again integral RKBS with $\varphi^\ell(x,w) = \tilde{\varphi}^\ell(\varphi(x,\cdot), w)$

\begin{theorem}[Integral Consistency]\label{thm:int_type_chain}
Let $L\in \mathbb{N}$, $L\geq 2$. If $\B^1, \B^{1\di}$ is an integral RKBS pair with domains $X,\Omega^1$ and kernel $\varphi^1$, and $\tilde{\B}^\ell, \tilde{\B}^{\ell\di}$ are integral RKBS pairs with domains $\B^{(\ell-1)\di},\Omega^\ell$ with kernel $\tilde{\varphi}^\ell$ for all $\ell=2,\hdots,L$, then the cRKBS pair $\B^L, \B^{L\di}$ with domains $X,\Omega^L$ is an integral RKBS pair with kernel $\varphi^L(x,w) = \tilde{\varphi}^L(\varphi^{L-1}(x,\cdot), w)$.
\end{theorem}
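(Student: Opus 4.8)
The plan is to argue by induction on the chain length, with the hypothesis that $\B^{\ell-1},\B^{(\ell-1)\di}$ is the integral RKBS pair for the kernel $\varphi^{\ell-1}$ (which is exactly the given data when $\ell=2$). The base case and the inductive step are the same single-link computation, so it suffices to chain one integral link $\tilde\B^\ell,\tilde\B^{\ell\di}$ onto an integral pair. Two facts set this up. First, because $\B^{\ell-1}$ is integral, Theorem~\ref{thm:int_type} gives its evaluation functionals explicitly as $K^{\ell-1}_x=\varphi^{\ell-1}(x,\cdot)\in\B^{(\ell-1)\di}$, so the feature map $\psi(x)=\tilde K^\ell_{K^{\ell-1}_x}$ of Section~\ref{sec:chaining_rkbs} is well defined and, by Theorem~\ref{thm:chain_well-posed}, the chain kernel is $K^\ell(x,w)=\tilde K^\ell(K^{\ell-1}_x,w)=\tilde\varphi^\ell(\varphi^{\ell-1}(x,\cdot),w)=\varphi^\ell(x,w)$. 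Second, $\varphi^\ell$ is bounded (immediately from the boundedness of $\tilde\varphi^\ell$), so once its measurability is checked the integral RKBS pair for $\varphi^\ell$ exists by Theorem~\ref{thm:int_type}; measurability reduces to that of the section map $x\mapsto K^{\ell-1}_x$ into $\B^{(\ell-1)\di}$, a routine but slightly technical point I would dispatch first.

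For the primal space I insert the integral representation of a link function into the chain map. Writing $h\in\tilde\B^\ell$ as $h(g)=\int_{\Omega^\ell}\tilde\varphi^\ell(g,w)\,d\mu(w)$ for some $\mu\in\M(\Omega^\ell)$, I get
\begin{equation}
    f(x)=(Ah)(x)=h(K^{\ell-1}_x)=\int_{\Omega^\ell}\tilde\varphi^\ell(K^{\ell-1}_x,w)\,d\mu(w)=\int_{\Omega^\ell}\varphi^\ell(x,w)\,d\mu(w),
\end{equation}
which is precisely $(A^{\varphi^\ell}_{\Omega^\ell\to X}\mu)(x)$. Since every $h$ comes from some $\mu$ and $A\circ A_{\Omega^\ell\to\B^{(\ell-1)\di}}=A^{\varphi^\ell}_{\Omega^\ell\to X}$, the image $\B^\ell=A(\tilde\B^\ell)$ is exactly the integral primal space of $\varphi^\ell$. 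For the norm I collapse the two nested infima — the quotient infimum over $\set{h\given Ah=f}$ and the link infimum over representing measures of $h$ — into a single infimum over $\set{\mu\given A^{\varphi^\ell}_{\Omega^\ell\to X}\mu=f}$, recovering $\|f\|_{\B^\ell}=\inf|\mu|(\Omega^\ell)$, the integral primal norm.

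The dual space is where the work lies. I must show that the chain dual $\B^{\ell\di}=\set{q\in\tilde\B^{\ell\di}\given\braket{q}{h}=0\ \forall h\in\N(A)}$ equals, as a supremum-normed space of functions on $\Omega^\ell$, the integral dual $\mathcal{I}^\di:=\set{w\mapsto\int_X\varphi^\ell(x,w)\,d\rho(x)\given\rho\in\M(X)}$. The inclusion $\mathcal{I}^\di\subseteq\B^{\ell\di}$ is easy: for $\rho\in\M(X)$ the function $g_\rho$ is represented in $\tilde\B^{\ell\di}$ by the push-forward of $\rho$ under $x\mapsto K^{\ell-1}_x$, and the integral pairing together with Fubini gives $\braket{g_\rho}{h}=\int_X h(K^{\ell-1}_x)\,d\rho(x)$, which vanishes for $h\in\N(A)$; the norms match because both are the supremum over $\Omega^\ell$. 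The reverse inclusion is the main obstacle: a general $q\in\tilde\B^{\ell\di}$ is represented by a measure $\tilde\rho$ on the whole space $\B^{(\ell-1)\di}$, not on the section image $\set{K^{\ell-1}_x\given x\in X}$, and I must show that annihilating $\N(A)$ forces $q$ to be reproducible by a measure coming from $X$.

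My intended route is to transfer the question to the functional $\Phi_q:f\mapsto\braket{q}{f}$ on $\B^\ell$. It is bounded with $\|\Phi_q\|\le\|q\|_{\B^{\ell\di}}$, is well defined on the quotient precisely because $q$ annihilates $\N(A)$, and on the integral primal it acts as $A^{\varphi^\ell}_{\Omega^\ell\to X}\mu\mapsto\int_{\Omega^\ell}q\,d\mu$. If I can represent $\Phi_q$ by a measure $\rho\in\M(X)$ as $f\mapsto\int_X f\,d\rho$, then $q$ and $g_\rho$ pair identically against all of $\B^\ell$, and non-degeneracy of the chain pairing (Theorem~\ref{thm:chain_well-posed}) forces $q=g_\rho\in\mathcal{I}^\di$. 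Producing such a $\rho$ is the delicate step: I expect to obtain it from a Hahn–Banach/bipolar argument showing that the annihilation condition lets one replace the representing measure $\tilde\rho$, without changing the induced function on $\Omega^\ell$, by one supported on the closure of the section image, which then pulls back along the measurable section map to a Radon measure on $X$. Controlling this replacement is exactly where the asymmetry between the input domain $X$ and the larger dual space $\B^{(\ell-1)\di}$ must be handled, and it is the only genuinely non-routine part of the argument.
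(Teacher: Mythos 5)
Your proposal follows the paper's proof almost exactly in structure: the same induction on chain length, the same identification of the chain kernel via Theorem~\ref{thm:chain_well-posed} together with boundedness of $\tilde\varphi^\ell$, the same insertion of the integral representation of $h\in\tilde\B^\ell$ into $(Ah)(x)=h(\varphi^{\ell-1}_x)$ to obtain the primal representation and the collapsed infimum for the norm, and the same identification of the dual representation as the only non-routine step. Where you diverge is in how you propose to discharge that step. The paper does not pass to the functional $\Phi_q$ or invoke any Hahn--Banach/bipolar machinery; it works directly with the representing measure: writing $A^\di g=A_{\B^{(L-1)\di}\to\Omega^L}\rho$ with $\rho\in\M(\B^{(L-1)\di})$ and letting $\iota(x)=\varphi^{L-1}_x$ denote the section map, it observes that every $h\in\N(A)$ vanishes on $\iota(X)$, so the annihilation condition reduces to $\int_{\B^{(L-1)\di}\setminus\iota(X)}h\,d\rho=0$ for all $h\in\N(A)$, from which it concludes $g=A_{X\to\Omega^L}\iota_{\#}\mathfrak{r}_X\rho$ with $\mathfrak{r}_X\rho$ the restriction of $\rho$ to $\iota(X)$ and $\pi=\iota_{\#}\mathfrak{r}_X\rho\in\M(X)$ the desired measure. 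Your bipolar route aims at the same conclusion (replace the representing measure by one carried on the section image without changing the induced function on $\Omega^\ell$), and would likely work, but as written it is only a declared intention at precisely the point where a concrete device is required; the paper's restriction-and-pushforward is that device, and is more elementary. Two smaller remarks: the easy inclusion $\mathcal{I}^\di\subseteq\B^{\ell\di}$ in your two-sided formulation is not separately needed to verify Definition~\ref{def:int_type}, and you omit checking the pairing formula \eqref{eq:int_type-pairing}, which the paper dispatches in one line by combining the two representations. Your flagged concern about measurability of the section map $x\mapsto\varphi^{\ell-1}_x$ is legitimate and is in fact passed over silently in the paper.
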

\begin{proof}
We will prove that $\B^L, \B^{L\di}$ are integral RKBS pairs by induction on $L$ and then the form of the kernel $\varphi^L(x,w)$ follows directly from Theorem~\ref{thm:chain_well-posed}. The base case of $L=1$ is true by definition. 

Suppose $\B^{L-1}, \B^{(L-1)\di}$ are an integral cRKBS pair, for some $L\geq 2$. Then by Theorem~\ref{thm:chain_well-posed} $\B^L,\B^{L\di}$ form an RKBS pair with domains $X,\Omega^L$ and kernel $\varphi^L= \tilde{\varphi}^L(\varphi^{L-1}(x,\cdot), w)$. As $\tilde{\varphi}^L, \varphi^{L-1}$ are bounded, so is $\varphi^L$, and by Theorem~\ref{thm:kernel} this kernel is unique. What remains to be shown is that $\B^L,\B^{L\di}$ satisfy the extra requirements to be of integral RKBS, as in Definition~\ref{def:int_type}.

To check \eqref{eq:int_type-primal}, define 
\begin{equation}
    (A_{\Omega^L\to X}\mu)(x) = \int_{\Omega^L}\varphi^L(x,w)d\mu(w)
\end{equation}
for all $x\in X$ and $\mu\in \M(\Omega^L)$. By construction of $\B^L$, 
\begin{equation}
    f(x) = \braket{\tilde{\varphi}^L_{\varphi^{L-1}_x}}{h} = h(\varphi^{L-1}_x) = \int_{\Omega^{L-1}}\tilde{\varphi}^L(\varphi^{L-1}_x,w)d\mu(w) = (A_{\Omega^L\to X}\mu)(x)
\end{equation}
Hence, functions in $\B^L$ satisfy \eqref{eq:int_type-primal}. 

To check \eqref{eq:int_type-dual}, define 
\begin{equation}
    \begin{split}
    (A_{X\to\Omega^L}\pi)(w) &= \int_{X}\varphi^L(x,w)d\pi(x) \\
    (A_{\B^{(L-1)\di}\to \Omega^L}\rho)(w) &= \int_{\B^{(L-1)\di}}\tilde{\varphi}^L(g,w)d\rho(g) \\
    \iota(x) &= \varphi^{L-1}_x 
\end{split}    
\end{equation}
for all $w\in \Omega^L$, $x\in X$, $\pi\in \M(X)$ and $\rho\in \M(\B^{(L-1)\di})$. From the inclusion relation of $\B^{L\di}$ into $\tilde{\B}^{L\di}$ it follows that there exists a $\rho\in \M(\B^{(L-1)\di})$ for all $g\in\B^{L\di}$ such that $g=A_{\B^{(L-1)\di}\to \Omega^L}\rho$, but need to show that there exists a $\pi\in\M(X)$ such that $g=A_{X\to\Omega^L}\pi$. We use the definition of $\B^{L\di}$ and the map $\iota$ to explicitly construct a $\pi\in\M(X)$ for each $g\in \B^{L\di}$.

If $h\in \N(A_{\Omega^L\to X})$, then
\begin{equation}
    h(\iota(x)) = h(\varphi^{L-1}_x) = 0
\end{equation}
for all $x\in X$. Hence, if $g\in \B^L$, then
\begin{equation}
    0 = \braket{A^\di g}{h} = \braket{A_{\B^{(L-1)\di}\to \Omega^L}\rho}{h} 
    = \int_{\B^{(L-1)\di}}h(g)d\rho(g) = \int_{\B^{(L-1)\di}\backslash \iota(X)}h(g)d\rho(g)
\end{equation}
for all $h\in \N(A_{\Omega^L\to X})$ and some $\rho\in \M(\tilde{\B}^{(L-1)\di})$ such that $g=A^\di g=A_{\B^{(L-1)\di}\to \Omega^L}\rho$. Thus, any $g\in \B^{L\di}$ can be written as
\begin{equation}
    g=A_{\B^{(L-1)\di}\to \Omega^L}\rho = A_{X\to \Omega^L}\iota_{\#}\mathfrak{r}_X\rho
\end{equation}
for some $\rho\in \M(\B^{(L-1)\di})$ with restriction to $\iota(X)$ given by $\mathfrak{r}_X\rho\in \M(\iota(X))$. Since $\pi=\iota_{\#}\mathfrak{r}_X\rho$, the push-forward of the measure $\mathfrak{r}_X\rho$ from $\M(\iota(X))$ to $\M(X)$ along $\iota$, is an element of $\M(X)$, the functions in $\B^{L\di}$ satisfy \eqref{eq:int_type-dual}.

Last, observe that \eqref{eq:int_type-pairing} follows from
\begin{equation}
    \braket{g}{f} = \braket{A^\di g}{h} = \braket{A^\di g}{\mu} = \braket{A_{X\to \Omega^L}\pi}{\mu} = \int_{\Omega^L}\int_X\varphi^L(x,w)d\pi(x)d\mu(w)
\end{equation}
\end{proof}

This shows that integral cRKBS are indeed standard integral RKBS with more complicated kernels. In the next section, we will choose particular kernels which correspond to neural networks.

\subsection{Neural cRKBS}
Deep neural networks, like multi-layer perceptrons, are characterised by alternating affine transformations and elementwise nonlinearities. The natural infinite width extension can be described by an integral cRKBS with a certain kernel, which is composed of alternating affine transformations and elementwise nonlinearities. We call these spaces \textit{neural cRKBS}. In this section, we will describe how neural cRKBS are constructed and show that the neural tree spaces $\set{\W^\ell}_{\ell=1}^L$ defined in \citet{e_banach_2020} are a specific instance of such a neural cRKBS. 

We start with the neural network case with a single layer. First, we need to define a vector space structure on $X$ and $\Omega$, so we can meaningfully define the notion of a linear layer. We take $X\subseteq \V, \Omega\subseteq\V^{\di}\times \R$, where $V, V^{\di}$ are a dual pair of normed vector spaces. Note that in most applications, these spaces are $\R^d$, where $d$ is the dimension of the input data. The $\times\R$ is added to represent the bias. To be a neural RKBS, the kernel has to take the form 
\begin{equation}
\varphi(x,w)=\sigma(\braket{v}{x} + b)\beta(w)    
\end{equation}
where $w=(v,b)$, $\sigma:\R\to\R$ is some (nonlinear) function, called the activation function, and $\beta:\Omega\to\R$, a weighting function such that $\varphi$ is bounded. 

A well-known example of a neural RKBS is the Barron space, although it is more often written in the form where $\beta$ is moved into the total variation norm of the representing measures. For a detailed investigation of these spaces, see \citep{spek_duality_2023}.

\begin{definition}\label{def:neural_rkbs}
Let $X\subseteq \V, \Omega\subseteq\V^\di\times \R$, where $\V, \V^{\di}$ are a dual pair of normed vector spaces. Let $\sigma:\R\to \R$ and $\beta:\Omega \to \R$ be measurable and positive, and such that $\varphi:X\times \Omega\to \R$ is bounded, where $\varphi(x,w):= \sigma(\braket{v}{x} + b)\beta(w)$ for all $x\in X$ and $w=(v,b)\in \Omega$.

If $\B, \B^{\di}$ are an integral RKBS pair with kernel $\varphi$, then $\B, \B^{\di}$ are a neural RKBS pair.
\end{definition}

In this definition, the purpose of the weighting function $\beta$ is to ensure that the kernel $\varphi$ is bounded even when $\sigma$ and $\Omega$ are not. This is a technical complication to accommodate certain commonly used activation functions. For example, if $\B$ is a neural RKBS with domain $X$, some bounded subset of $\R^d$, and parameter space $\Omega=\R^d\times \R$, activation function $\sigma=\text{SoftPlus}:=x\mapsto \log(1+\exp(x))$ and weighting function $\beta(w) = (1+\norm{v}+\abs{b})^{-1}$. The activation function $\sigma$ is an unbounded Lipschitz continuous function with Lipschitz constant 1, and the weighting function ensures that
\begin{equation}
    \abs{\varphi(x,(v,b))} \leq \frac{\abs{\braket{v}{x}}+\abs{\sigma(0)}+\abs{b}}{1+\norm{v}+\abs{b}} \leq \frac{\norm{x}\norm{v}+\log(2)+\abs{b}}{1+\norm{v}+\abs{b}} \leq \max\set{\norm{x},1}
\end{equation}
for all $x\in X,(v,b) \in \Omega$. It is also possible to avoid using the weighting function $\beta$ by changing the definition of the integral RKBS to only consider measures $\mu$ such that \eqref{eq:int_type-primal} is integrable for all $x\in X$. However, this makes the analysis below much more complicated, so we opt to use a weighting function $\beta$ instead. 

To extend these ideas to chain RKBSs, we need a similar linear structure for the domains $\B^{(\ell-1)\di}, \Omega^\ell$ of the link spaces $\tilde{\B}^{\ell},\tilde{\B}^{\ell\di}$. We can do this by taking $\Omega^\ell \subseteq \B^{\ell-1}\times \R$, where we leverage the natural linear structure given by the pairing. Note that the $\di$-relation is cross-linked: $\tilde{\B}^{\ell}$ contains functions of $\B^{(\ell-1)\di}$ and $\tilde{\B}^{\ell\di}$ functions of $\B^{\ell-1}\times \R$. These link spaces are neural RKBS when their kernel is of the form
\begin{equation}\label{eq:link_neural_kernel}
    \tilde{\varphi}(g,(f,b)) = \sigma(\braket{g}{f}+b)\beta(f,b)
\end{equation}
for all $g\in \B^{(\ell-1)\di}$, $(f,b)\in \Omega^\ell\subseteq \B^{\ell-1}\times \R$, activation function $\sigma:\R\to\R$, and weighting function $\beta:\Omega^\ell\to\R$.

This leads to the formal definition of a neural cRKBS: a chain RKBS where the initial and link RKBS are neural RKBS. For a graphical description of the neural RKBS chain process, see Figure~\ref{fig:neural_chain}.

\begin{figure}
    \centering
    \includegraphics[width=0.9\linewidth]{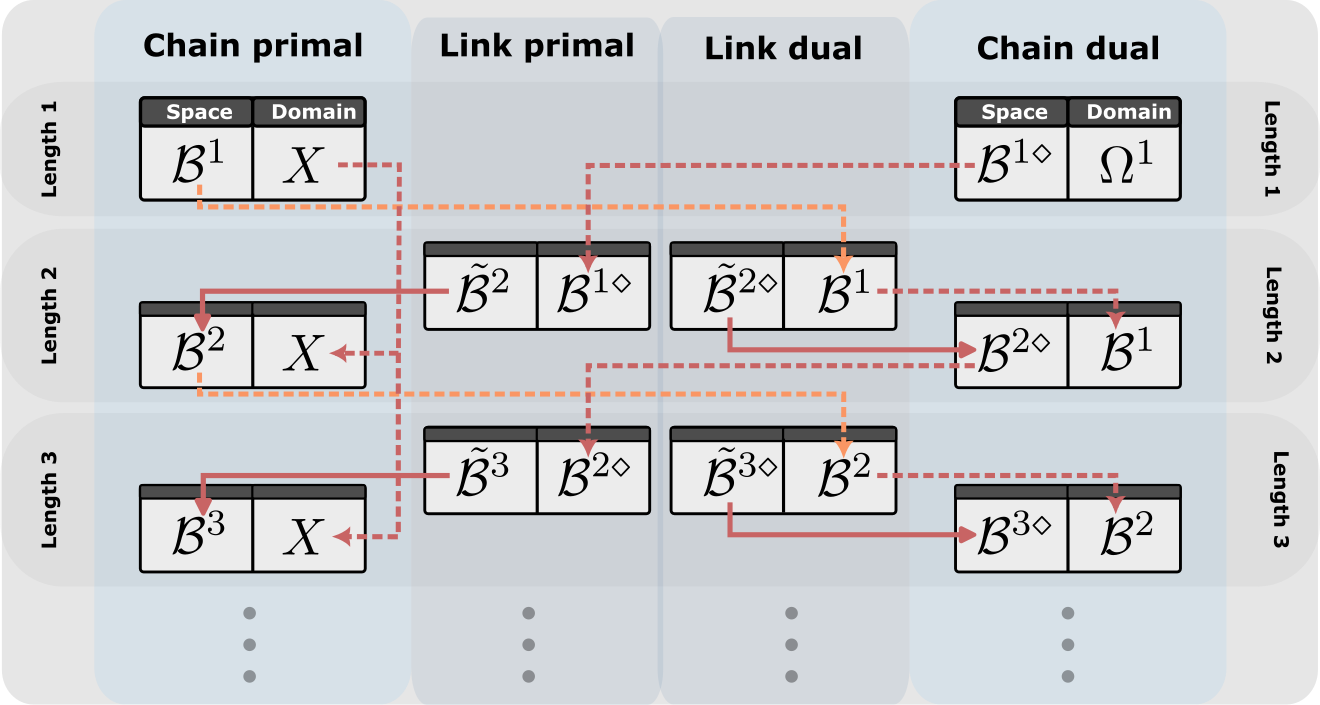}
    \caption{Schematic representation of the construction of a neural RKBS chain, with biases omitted from the diagram. The dependencies of the spaces have been indicated by arrows: a straight line when there is a map between the chain and link, and a dashed line for a domain identification. The orange arrows are the extra dependencies compared to the general case depicted in Figure~\ref{fig:kernel_chain}.}
    \label{fig:neural_chain}
\end{figure}

\begin{definition}
An integral cRKBS pair $\B^L,\B^{L\di}$ with domains $X,\Omega^L$ and kernel $\varphi^L$ are called a neural cRKBS pair if the initial RKBS pair $\B^1,\B^{1\di}$ is a neural RKBS pair with domains $X\subseteq \V,\Omega^1\subseteq\V\times \R$, with activation function $\sigma^1:\R\to\R$ and with weighting function $\beta^1:\Omega^1\to\R$, and if the link RKBS pairs $\tilde{\B}^\ell,\tilde{\B}^{\ell\di}$ are neural RKBS pairs with domains $\B^{(\ell-1)\di},\Omega^\ell\subseteq\B^{\ell-1}\times\R$, with activation functions $\sigma^\ell:\R\to\R$ and with weighting functions $\beta^\ell:\Omega^1\to\R$ for all $\ell=2,\hdots L$.
\end{definition}

Note that by construction these neural cRKBSs are 'regular' RKBSs: They have a proper norm, are complete vector spaces and have a kernel. This abstract definition implies that the functions of these RKBS have a particular recursive form:
\begin{equation}\label{eq:neural_fg}
\begin{split}
    f^L(x)&= \int_{\Omega^L}\sigma^L(f^{L-1}(x)+b)\beta^L(f^{L-1},b)d\mu(f^{L-1},b)\\
    g^L(w) &= \int_X \sigma^L(f^{L-1}(x)+b)\beta^L(f^{L-1},b) d\rho(x)
\end{split}
\end{equation}
where $x\in X, w=(f^{L-1},b)\in \Omega^L\subseteq \B^{L-1}\times \R$. To paraphrase this equation, a neural cRKBS of length $L$ has functions that are linear combinations of functions of a neural cRKBS of length $L-1$ with $\sigma$ applied to them. This form is due to the particular structure of the kernel $\varphi^L$.

\begin{theorem}\label{thm:neural_cRKBS_kernel}
If $\B^L,\B^{L\di}$ be a neural cRKBS pair with given $X,\Omega^\ell$, activation functions $\sigma^\ell$ and weighting functions $\beta^\ell$ for $\ell=1,\hdots, L$, then the kernel $\varphi^L:X\times \Omega^L$ is given by
\begin{equation}\label{eq:chain_neural_kernel}
    \varphi^L(x,(f^{L-1},b)) = \sigma^L(f^{L-1}(x)+b)\beta^L(f^{L-1},b)
\end{equation}
for all $x\in X$ and $(f^{L-1},b)\in \Omega$.
\end{theorem}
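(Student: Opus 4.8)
The plan is to reduce the statement to a single substitution, since Theorem~\ref{thm:int_type_chain} already supplies the shape of the chain kernel and the hypothesis pins down the shape of the link kernel. Concretely, the Integral Consistency theorem gives, for the cRKBS pair $\B^L,\B^{L\di}$, the identity $\varphi^L(x,w) = \tilde{\varphi}^L(\varphi^{L-1}(x,\cdot),w)$. Writing $\varphi^{L-1}_x := \varphi^{L-1}(x,\cdot)$, all the remaining work is to insert the neural form of $\tilde{\varphi}^L$ and to simplify the argument that appears in its first slot.

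First I would invoke the hypothesis that the link pair $\tilde{\B}^L,\tilde{\B}^{L\di}$ is a neural RKBS pair, so by \eqref{eq:link_neural_kernel} its kernel has the form $\tilde{\varphi}^L(g,(f,b)) = \sigma^L(\braket{g}{f}+b)\beta^L(f,b)$ for $g\in\B^{(L-1)\di}$ and $(f,b)\in\Omega^L\subseteq\B^{L-1}\times\R$. Evaluating this at $g=\varphi^{L-1}_x$ and $w=(f^{L-1},b)$ and combining it with the Integral Consistency identity yields
\[
\varphi^L(x,(f^{L-1},b)) = \sigma^L\!\left(\braket{\varphi^{L-1}_x}{f^{L-1}}+b\right)\beta^L(f^{L-1},b).
\]

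The remaining step is to recognise $\varphi^{L-1}_x$ as the evaluation functional $K^{L-1}_x$ of $\B^{L-1}$. By Theorem~\ref{thm:int_type}, in an integral RKBS pair the evaluation functional equals $K^{L-1}_x = A_{X\to\Omega^{L-1}}\delta_x = \varphi^{L-1}(x,\cdot)$, which indeed lies in $\B^{(L-1)\di}$; hence the pairing $\braket{\varphi^{L-1}_x}{f^{L-1}}$ is precisely the one for which the $(L-1)$-layer reproducing property of Definition~\ref{def:rkbs2} holds, and it collapses to $f^{L-1}(x)$. Substituting this into the display above gives exactly \eqref{eq:chain_neural_kernel}.

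The step I expect to require the most care is not a calculation but a bookkeeping check: confirming that the bracket appearing inside the neural link kernel \eqref{eq:link_neural_kernel} is literally the $\B^{L-1}$--$\B^{(L-1)\di}$ pairing, so that feeding the evaluation functional $\varphi^{L-1}_x$ into its first argument triggers the reproducing identity rather than some formally distinct pairing. This is guaranteed by construction, since $\Omega^L\subseteq\B^{L-1}\times\R$ and $\tilde{\varphi}^L$ was defined through that very pairing, but it is the one place where the cross-linked $\di$-structure of the link spaces must be tracked carefully. Everything else is mechanical, so no further induction or limiting argument is needed beyond what Theorem~\ref{thm:int_type_chain} already provides.
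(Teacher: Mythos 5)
Your proposal is correct and follows essentially the same route as the paper's own proof: invoke Theorem~\ref{thm:int_type_chain} for the form $\varphi^L(x,w)=\tilde{\varphi}^L(\varphi^{L-1}(x,\cdot),w)$, substitute the neural link kernel \eqref{eq:link_neural_kernel}, and collapse $\braket{\varphi^{L-1}(x,\cdot)}{f^{L-1}}$ to $f^{L-1}(x)$ via the reproducing property of $\varphi^{L-1}$. The bookkeeping point you flag about identifying the bracket with the $\B^{L-1}$--$\B^{(L-1)\di}$ pairing is exactly the step the paper handles by citing \eqref{eq:kernel}.
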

\begin{proof}
Let $\B^L,\B^{L\di}$ be a neural cRKBS pair with given $X,\Omega^\ell$, activation functions $\sigma^\ell$ and weighting functions $\beta^\ell$ for $\ell=1,\cdots, L$. By Theorem~\ref{thm:int_type_chain} the kernel $\varphi^\ell$ must be of the form $\varphi^L(x,w)=\tilde{\varphi}^L(\varphi^{L-1}(x,\cdot), w)$ for all $x\in X,w\in \Omega^L$. We can write $w=(f^{L-1},b)\in \Omega^L\subseteq \B^{L-1}\times \R$ and by \eqref{eq:link_neural_kernel} and the kernel property of $\varphi^{L-1}$ \eqref{eq:kernel} that
\begin{equation}
    \varphi^L(x,(f^{L-1},b)) = \tilde{\varphi}^L(\varphi^{L-1}(x,\cdot), w) = \sigma^L(\braket{\varphi^{L-1}(x,\cdot)}{f^{L-1}}+b)\beta^L(f^{L-1},b) = \sigma^L(f^{L-1}(x)+b)\beta^L(f^{L-1},b)
\end{equation}
for all $x\in X$ and $(f^{L-1},b)\in \Omega$. 
\end{proof}

As $f^{L-1}$ is an element of an integral RKBS, there exists a measure $\mu\in\M(\Omega^{L-1})$ such that $f^{L-1}=A_{\Omega^{L-1}\to X} \mu$. This allows us to expand the kernel $\varphi^L$ in the following way
\begin{equation}
    \varphi(x,(A_{\Omega^{L-1}\to X} \mu,b)) = \sigma^L\left(b+\int_{\Omega^{L-1}}\varphi^{L-1}(x,w)d\mu(w)\right) \beta^L(A_{\Omega^{L-1}\to X} \mu,b)
\end{equation}
Here we now more clearly see the neural network structure of the neural cRKBS: The kernel of layer $L$ is the composition of an activation function $\sigma^L$, an affine transformation with weights $\mu$ and bias $b$, and the kernel of the previous layer $L-1$. The non-linearity is elementwise because $\sigma$ acts on the function evaluated at $x$, which is the natural extension of the concept of elementwise to RKBSs. 

The neural cRKBS $\B^L$ with $\Omega^\ell=\B^{(\ell-1)}\times \R$ contains all the multi-layer perceptrons of length L with arbitrary widths at each layer with given activation functions. The key idea here is that we can choose $\mu$ as a linear combination of Dirac measures. When $\Omega^\ell$ are chosen smaller, $\B^L$ can be restricted to only networks where weights are bounded, or to represent convolutional layers, or other linear layers that respect some symmetry for example.

\begin{theorem}\label{thm:deep_networks_in_neural_cRKBS}
Let $L,d^0\in\mathbb{N}$. Let $B^L,\B^{L\di}$ be a neural cRKBS pair with $X\subseteq\R^{d^0}, \Omega^1=\R^{d^0}\times \R$, $\Omega^\ell=\B^{\ell-1}\times \R$ for $\ell=2,\hdots,L$, and activation functions $\sigma^\ell$ and weighting functions $\beta^\ell$, for $\ell=1,\hdots,L$. Furthermore, let $d^\ell\in \mathbb{N}$ and weight matrices $W^\ell\in \R^{d^{\ell+1}\times d^{\ell}}$ for $\ell=0,\hdots, L-1$ and bias vectors $b^\ell \in \R^{d^\ell}$ for $\ell=1,\hdots, L$ and row vector $W^L\in \R^{1\times d^L}$. If the deep neural network $f:X\to\R$ is given by
\begin{equation}
    f(x) = W^{L}\hat{\sigma}^L(b^L+W^{L-1}\hat{\sigma}^{L-1}(\cdots \hat{\sigma}^1(b^1 + W^0x) \cdots ))
\end{equation}
for all $x\in X$ in which $\hat{\sigma}^\ell:\R^{d^\ell}\to \R^{d^\ell}$ are the elementwise extensions of $\sigma^\ell$ for $\ell=1,\hdots, L$, then $f\in \B^L$ and
\begin{equation}
    \norm{f^L}_{\B^L}\leq \sum_{j=1}^{d^L} \frac{|W^{L}_{1j}|}{\beta^L\left(f_j^{L-1},b_j^L\right)}
\end{equation}
where
\begin{equation}
    f_j^{L-1}(x) = W_{j\cdot}^{L-1}\hat{\sigma}^{L-1}\left(\cdots \hat{\sigma}^1\left(b^1 + W^0x\right) \cdots \right)
\end{equation}
with $W^{L-1}_{j\cdot}$ the $j$th row of $W^{L-1}$.
\end{theorem}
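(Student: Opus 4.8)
The plan is to prove the statement by induction on the depth $L$, at each step exhibiting $f$ explicitly as the integral representation \eqref{eq:int_type-primal} of a finitely supported measure $\mu\in\M(\Omega^L)$. The recursive form \eqref{eq:neural_fg}, which is a consequence of the kernel formula in Theorem~\ref{thm:neural_cRKBS_kernel}, tells us precisely what such a representation must look like: every element of $\B^L$ is an integral of $\sigma^L(f^{L-1}(x)+b)\beta^L(f^{L-1},b)$ against a measure on $\Omega^L\subseteq\B^{L-1}\times\R$. The entire argument then reduces to matching the finite-width network against this form with an atomic measure.

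For the inductive step I would first peel off the last layer. Since $W^L$ is a $1\times d^L$ row vector, expanding the final matrix product gives
\[
f(x)=\sum_{j=1}^{d^L} W^L_{1j}\,\sigma^L\!\left(f_j^{L-1}(x)+b_j^L\right),
\]
where $f_j^{L-1}(x)=W^{L-1}_{j\cdot}\hat\sigma^{L-1}(\cdots)$ is exactly the depth-$(L-1)$ sub-network obtained by using the $j$-th row of $W^{L-1}$ as its scalar output row vector. Each $f_j^{L-1}$ is therefore a depth-$(L-1)$ network of precisely the shape covered by the induction hypothesis, so $f_j^{L-1}\in\B^{L-1}$, and consequently each $(f_j^{L-1},b_j^L)\in\B^{L-1}\times\R=\Omega^L$ is a legitimate point of the parameter domain.

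The key construction is then the discrete measure
\[
\mu:=\sum_{j=1}^{d^L}\frac{W^L_{1j}}{\beta^L(f_j^{L-1},b_j^L)}\,\delta_{(f_j^{L-1},b_j^L)}\in\M(\Omega^L),
\]
which is well-defined because $\beta^L$ is positive, and which is automatically Radon with finite total variation as it is finitely supported. Substituting $\mu$ into \eqref{eq:neural_fg}, the weighting factors $\beta^L(f_j^{L-1},b_j^L)$ cancel against the denominators and one recovers $f=A_{\Omega^L\to X}\mu$ identically, so $f\in\B^L$. The norm estimate follows from the infimum definition of $\|\cdot\|_{\B^L}$ in \eqref{eq:int_type-primal}, bounded by $|\mu|(\Omega^L)=\sum_j |W^L_{1j}|/\beta^L(f_j^{L-1},b_j^L)$, again using positivity of $\beta^L$. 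The base case $L=1$ is the same computation with $f_j^0(x)=\braket{W^0_{j\cdot}}{x}$ and support points $(W^0_{j\cdot},b^1_j)\in\Omega^1$, reducing to the familiar fact that a shallow network is a Barron-type function.

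I expect the only genuine subtlety to be the threading of the induction hypothesis: one must check that stripping off the output layer leaves, for each coordinate $j$, a sub-network whose depth, activation functions, and parameter domains match the hypothesis exactly, so that $f_j^{L-1}$ truly lies in $\B^{L-1}$ rather than in some larger ambient function space. Once this bookkeeping is in place, the cancellation with $\beta^L$ and the passage from the atomic sum to the integral form are immediate, and no analytic difficulty arises because all measures involved are finitely supported.
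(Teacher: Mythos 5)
Your proposal matches the paper's proof essentially verbatim: both argue by induction on $L$, peel off the last layer to write $f$ as a finite sum of $\sigma^L(f_j^{L-1}(x)+b_j^L)$ terms, invoke the induction hypothesis to place each $f_j^{L-1}$ in $\B^{L-1}$, and represent $f$ via the atomic measure $\mu=\sum_j W^L_{1j}\beta^L(f_j^{L-1},b_j^L)^{-1}\delta_{(f_j^{L-1},b_j^L)}$, with the norm bound following from the infimum definition. The argument is correct and no further comparison is needed.
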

\begin{proof}
We will prove that $f^L\in \B^L$ by induction on $L$. First, the case where $L=1$. Let $f^1$ be a shallow neural network
\begin{equation}
    f^1(x) = \sum_{j=1}^{d^1} W^1_{1j}\sigma\left(b^1_j + \sum_{n=1}^{d^0}W^0_{jn}x_n\right)
\end{equation}
We choose
\begin{equation}
    \mu:= \sum_{j=1}^{d^1} \frac{W^1_{1j}}{\beta^1(W^0_{j\cdot},b_j^1)} \delta_{\left(W^0_{j\cdot},b_j^1\right)}
\end{equation}
where $\delta$ are Dirac measures and $W^0_{j\cdot}$ the $j$th row of $W^0$. We have that $\mu\in\M(\Omega^L)$, as $\left(W^0_{j\cdot},b_j^1\right)\in \Omega^1=\R^{d^0}\times \R$. 
\begin{equation}
    f^1(x) = \sum_{j=1}^{d^1} \frac{W^1_{1j}}{\beta^1(W^0_{j\cdot},b_j^1)}\sigma\left(b^1_j + \sum_{n=1}^{d^0}W^0_{jn}x_n\right)\beta^1(W^0_{j\cdot},b_i^1) = \int_{\Omega^1} \sigma^1(\braket{v}{x}+b)\beta^1(v,b)d\mu(v,b)
\end{equation}
Hence, by Definitions~\ref{def:int_type}~and~\ref{def:neural_rkbs}, we get that $f^1\in \B^1$. 

Suppose $f^L$ is a neural network of length $L$. Then we can write $f^L$ as the last layer acting on a collection of $f^{L-1}_i$, for $i=1,\cdots,d^L$. 
\begin{equation}
    f^L(x) = \sum_{j=1}^{d^L} W^{L}_{1j} \sigma(b^L_j + f_j^{L-1}(x))
\end{equation}
By the induction hypothesis $f_j^{L-1}\in \B^{L-1}$ for all $j=1,\cdots, d^L$, so we choose 
\begin{equation}
    \mu:= \sum_{j=1}^{d^L} \frac{W^{L}_{1j}}{\beta^L(f_j^{L-1},b_j^L)}\delta_{\left(f_j^{L-1},b_j^L\right)}
\end{equation}
We have that $\mu\in\M(\Omega^L)$, as $(f_i^{L},b_i^L)\in \Omega^L=\B^{L-1}\times \R$. 
\begin{equation}
\begin{split}
    f^L(x) &= \sum_{j=1}^{d^L} \frac{W^{L}_{1j}}{\beta^L\left(f^{L}_j,b_j^L\right)}\sigma^L\left(b^L_i + f^{L-1}(x)\right)\beta^L\left(f^{L-1}_j,b_j^L\right) \\&= \int_{\Omega^1} \sigma^L(f^{L-1}(x)+b)\beta^L(f^{L-1},b)d\mu(f^{L-1},b)
\end{split}    
\end{equation}
Hence, by \eqref{eq:neural_fg} and Theorems~\ref{thm:int_type_chain}~and~\ref{thm:neural_cRKBS_kernel} we get that $f^L\in\B^L$.

Furthermore, the definition of the norm of $\B^L$ implies that 
\begin{equation}
    \norm{f^L}_{\B^L}\leq |\mu|(\Omega^L) = \sum_{j=1}^{d^L} \frac{|W^{L}_{1j}|}{\beta^L\left(f_j^{L-1},b_j^L\right)}
\end{equation}
\end{proof}

For most activation functions we can set $\beta$ to be constant, so the norm of $f^L$ is just the 1-norm of the weights of the last linear layer. 

\subsection{Relation to other spaces}
In this section, we will discuss how the neural cRKBS relates to other function spaces for deep networks, like generalised Barron spaces and hierarchical spaces. We will not consider bottlenecked spaces, as this approach based on function composition is incompatible with kernel composition.

The neural tree spaces of \citet{e_banach_2020} are a special case of a neural cRKBS $\B^L$. With the choices $X=K\subseteq \R^{d^0}$, some fixed compact set $K$, $\Omega^1 = \R^d\times \R$, $\Omega^\ell = \set{f\in \B^{\ell-1} \given \|f\|_{\B^{\ell-1}}\leq 1}\times \{0\}$, the unit ball in $\B^{\ell-1}$ without biases, $\sigma^\ell = ReLU$, the rectified linear unit and $\beta^\ell = 1$, because $\sigma$ is continuous and both $X$ and $\Omega^l$ are bounded, we exactly retrieve the definition of the space $\mathcal{W}^L(K)$ of Section 3.1 of \citet{e_banach_2020}. This paper however does not have a dual framework, so the $\B^{L\di}$ has no analogue. 

The authors show that for this space there are direct and inverse approximation theorems and that the norm in this space for deep networks is equivalent to a path norm
\begin{equation}
    \norm{f}_{\B^L} \leq \sum_{j_L=1}^{d^L} \cdots \sum_{j_1=1}^{d^1} \sum_{j_0=1}^{d^0} \left|W_{1j_L}^{L}\hdots W_{j_2 j_1}^{1} W_{j_1 j_0}^0\right|
\end{equation}
This result is to be expected as the ReLU functions are $1-$homogeneous. Therefore, the sizes of the inner weights can be moved to the last layer weight $W^L$ and keep the inner weights in a unit ball. One can reformulate this space with a full domain $\Omega^\ell=\B^{\ell-1}\times \R$ by introducing a weighting function $\beta^\ell(f^{\ell-1},b)=(\|f\|_{\B^{\ell-1}}+|b|)^{-1}$.

The original definition of the $\mathcal{W}^L(K)$ does not include a bias. In general, if the constant function $c:x\mapsto 1$ is included in $\B^{\ell-1}$, the term $f(x)+b$ with $f\in \B^\ell$ and $b\in\R$ can be rewritten as $(f+bc)(x)$. A push-forward argument shows that this leads to an isomorphism between the primals of the pairs with and without biases. It depends on the weighting function whether this isomorphism is isometric. Removing the bias implies a different $\Omega^\ell$, so the functions $g\in \B^{L\di}$ have a different domain. However, each of these $g$ can be thought of as a restriction to the domain without a bias.
\begin{lemma}
Let $\B^{L-1}, \B^{(L-1)\di}$ be a neural cRKBS pair with domains $X, \Omega^{L-1}$. Consider the neural link RKBS pairs $\tilde{\B}_1^{L},\tilde{\B}_1^{L\di}$ and $\tilde{\B}_2^{L},\tilde{\B}_2^{L\di}$ with domains $\B^{(L-1)\di},\Omega_1^L:=\B^{L-1}$ and $\B^{(L-1)\di},\Omega_2^L:=\B^{L-1}\times \R$ respectively, activation functions $\sigma^L_1=\sigma^L_2=\sigma$ as well as weighting functions $\beta_1,\beta_2$ satisfying $\beta_1(f)=\beta_2(f,0)$. Moreover, let $\B_i^L$ be the neural cRKBS constructed from $\B^{L-1},\B^{(L-1)\di}$ by linking $\tilde{\B}_i^{L},\tilde{\B}_i^{L\di}$ for $i=1,2$. If $\B^{L-1}\ni c:x\mapsto 1$ and $\sup_{f,b\in \B^{L-1}\times \R}\abs{\frac{\beta_2(f,b)}{\beta_1(f+bc)}}<\infty$, then $\B_1^L$ and $\B_2^L$ are isomorphic, and the restriction of $\B_2^{L\di}$ to $\Omega_1^L$ is $\B_1^{L\di}$.
\end{lemma}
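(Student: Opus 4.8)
The plan is to exploit the pointwise identity $(f+bc)(x)=f(x)+b$, which holds because $c(x)=1$ for all $x\in X$, in order to absorb the bias of the second link into its function argument, and then to transport representing measures between the two integral cRKBS along the resulting affine maps. Concretely, I would introduce the continuous linear map $T:\B^{L-1}\times\R\to\B^{L-1}$, $T(f,b):=f+bc$, together with the embedding $\iota:\B^{L-1}\to\B^{L-1}\times\R$, $\iota(g):=(g,0)$, noting that $T\circ\iota=\mathrm{id}$ and $(T(f,b))(x)=f(x)+b$. By Theorem~\ref{thm:int_type_chain} both $\B_1^L$ and $\B_2^L$ are integral RKBS, so their elements are produced by the generating maps $(A_1\nu)(x)=\int_{\B^{L-1}}\sigma(g(x))\beta_1(g)\,d\nu(g)$ and $(A_2\mu)(x)=\int_{\B^{L-1}\times\R}\sigma(f(x)+b)\beta_2(f,b)\,d\mu(f,b)$, for $\nu\in\M(\Omega_1^L)$ and $\mu\in\M(\Omega_2^L)$.

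The core computation is a change of variables. Given $\mu$ representing $f^L\in\B_2^L$, I would set $d\tilde\mu:=\tfrac{\beta_2(f,b)}{\beta_1(f+bc)}\,d\mu$, which is a well-defined finite signed measure precisely because $\beta_1$ is positive and $C:=\sup|\beta_2(f,b)/\beta_1(f+bc)|<\infty$, and then push it forward to $\nu:=T_\#\tilde\mu\in\M(\Omega_1^L)$. Using $(T(f,b))(x)=f(x)+b$ and $\beta_1(T(f,b))=\beta_1(f+bc)$, the change-of-variables formula collapses the density and yields $A_1\nu=A_2\mu=f^L$, so $\B_2^L\subseteq\B_1^L$. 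Conversely, for $\nu$ representing $f^L\in\B_1^L$, the push-forward $\mu:=\iota_\#\nu$ together with $\beta_2(g,0)=\beta_1(g)$ gives $A_2\mu=f^L$, so $\B_1^L\subseteq\B_2^L$. Hence the two spaces consist of exactly the same functions on $X$ and the claimed isomorphism is the identity map. For the norms I would feed these constructions the total-variation estimates $|T_\#\tilde\mu|(\Omega_1^L)\le|\tilde\mu|(\Omega_2^L)\le C\,|\mu|(\Omega_2^L)$ and $|\iota_\#\nu|(\Omega_2^L)=|\nu|(\Omega_1^L)$ (the latter equality because $\iota$ is injective), and take infima over representing measures to obtain $\|f^L\|_{\B_2^L}\le\|f^L\|_{\B_1^L}\le C\,\|f^L\|_{\B_2^L}$; these bounds degenerate to an isometry exactly when $\beta_2(f,b)=\beta_1(f+bc)$, i.e.\ $C=1$.

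For the dual statement I would again use Theorem~\ref{thm:int_type_chain}, by which elements of $\B_i^{L\di}$ are generated from measures $\rho\in\M(X)$ via $g_2^L(f,b)=\int_X\sigma(f(x)+b)\beta_2(f,b)\,d\rho(x)$ and $g_1^L(g)=\int_X\sigma(g(x))\beta_1(g)\,d\rho(x)$. Restricting $g_2^L$ to $\Omega_1^L=\{(f,0)\}\cong\B^{L-1}$ and invoking $\beta_2(f,0)=\beta_1(f)$ gives $g_2^L(f,0)=g_1^L(f)$, so restriction carries the dual element with representing measure $\rho$ to the $\B_1^{L\di}$ element with the same $\rho$; since every $\rho\in\M(X)$ arises on both sides, this identifies the restriction of $\B_2^{L\di}$ to $\Omega_1^L$ with $\B_1^{L\di}$.

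I expect the main obstacle to be measure-theoretic rather than algebraic: one must verify that reweighting $\mu$ by the bounded Borel density $\beta_2/(\beta_1\circ T)$ and pushing forward along the continuous maps $T$ and $\iota$ genuinely produce elements of $\M(\Omega_1^L)$ and $\M(\Omega_2^L)$ — that is, regular signed measures of finite total variation on these possibly infinite-dimensional Banach domains — and that the change-of-variables identity $A_1(T_\#\tilde\mu)=A_2\mu$ holds with every integral well defined. The boundedness hypothesis on $\beta_2/(\beta_1\circ T)$ is exactly the ingredient that controls the total variation after reweighting, while continuity of $T$ and $\iota$ secures measurability and the preservation of regularity under push-forward.
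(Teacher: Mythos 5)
Your proposal is correct and follows essentially the same route as the paper's proof: the change of variables via the density $\beta_2(f,b)/\beta_1(f+bc)$ followed by the push-forward along $(f,b)\mapsto f+bc$ is exactly the paper's $\Theta^{\#}\nu$ construction, the reverse inclusion via the lift $f\mapsto(f,0)$ matches the paper's ``immediate'' embedding, and the dual statement is handled by the same restriction map. Your explicit identification of the shared representing measure $\rho$ on the dual side, and your flagging of the regularity of push-forward Radon measures on infinite-dimensional domains, are slightly more careful than the paper's treatment but do not change the argument.
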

\begin{proof}
The embedding of $\B_1^L$ into $\B_2^L$ is immediate. 

For the converse, let $f^L\in \B^L_2$. There exists a measure $\mu\in \M(\B^{L-1}\times\R)$ so that
\begin{align*}
    f^L(x)  
    &= \int_{\B^{L-1}\times \R}\sigma(f(x)+b)\beta_2(f,b)d\mu(f,b) \\
    &= \int_{\B^{L-1}\times \R}\sigma(f(x)+bc(x))\beta_2(f,b)d\mu(f,b) \\
    &= \int_{\B^{L-1}\times \R}\sigma((f+bc)(x))\beta_1(f+bc)\frac{\beta_2(f,b)}{\beta_1(f+bc)}d\mu(f,b) \\
    &= \int_{\B^{L-1}}\sigma(f(x))\beta_1(f)d\gamma(f)
\end{align*}
where $\gamma:=\Theta^{\#}\nu$ with
\begin{equation}
    \begin{split}
    \frac{d\nu}{d\mu}(f,b) &:= \frac{\beta_2(f,b)}{\beta_1(f+bc)} \\
    \Theta&: \B^{L-1}\times \R \to \B^L,\; (f,b)\mapsto (f+bc)
\end{split}    
\end{equation}
and
\begin{align}
    \norm{f} \leq \norm{\gamma} \leq \int_{\B^{L-1}\times \R}\abs{\frac{\beta_2(f,b)}{\beta_1(f+bc)}}d\abs{\mu}(f,b) \leq \sup_{f,b\in \B^{L-1}\times \R}\abs{\frac{\beta_2(f,b)}{\beta_1(f+bc)}} \norm{\mu}
\end{align}
Taking the infimum over $\mu$ shows the embedding of $\B^L_2$ into $\B^L_1$.

Denote with $\iota_{\text{res}}$ the restriction map, i.e.
\begin{equation}
     \iota_{\text{res}}g_2(f) = g_2(f,0)
\end{equation}
for $g_2\in\B^{L\di}_2$ and $f\in \B^{L-1}$. This map is linear with
\begin{equation}
    \norm{\iota_{\text{res}}} \leq 1
\end{equation}
and $\iota_{\text{res}}(\B_2^{L\di})=\B_1^{L\di}$
\end{proof}

Now we can show that the neural tree space $\W^L$ is equivalent to neural cRKBS $\B^L$ with $\Omega^\ell=\B^{\ell-1}\times \R$. 

\begin{theorem}
Let $L\in\mathbb{N}$, $X\subseteq \R^d$ compact, and denote with $\W^\ell$ the neural tree spaces. If $\sigma^\ell:=\relu:=x\mapsto \max(0,x)$, $\beta^\ell(f,b)=\frac{1}{\norm{f}+\abs{b}}$ and $\Omega=\R^{d+1}$, then $\B^L,\B^{L\di}$ is a neural cRKBS with $\B^\ell$ isometrically isomorphic to $\W^\ell$ for all $\ell=1,\hdots,L$.   
\end{theorem}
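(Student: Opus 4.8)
The plan is to prove the isometric isomorphism $\B^\ell \cong \W^\ell$ by induction on $\ell$, while carrying along the auxiliary invariant that the constant function $c:x\mapsto 1$ lies in $\B^\ell$ with $\norm{c}_{\B^\ell}\le 1$. (I read $\Omega=\R^{d+1}$ as specifying the first-layer parameter set $\Omega^1=\R^{d}\times\R$, with $\Omega^\ell=\B^{\ell-1}\times\R$ for $\ell\ge 2$ as in the neural cRKBS definition.) As a preliminary, to establish that this choice of $\sigma^\ell=\relu$ and $\beta^\ell(f,b)=(\norm{f}+\abs{b})^{-1}$ genuinely defines a neural cRKBS, I would check that the link kernels are independently bounded and measurable: since $\relu$ is $1$-Lipschitz, $\abs{\sigma(\braket{g}{f}+b)}\le(\norm{g}\vee 1)(\norm{f}+\abs{b})$, so $\tilde{\varphi}^\ell(g,(f,b))=\sigma(\braket{g}{f}+b)\beta^\ell(f,b)$ is controlled by $\norm{g}\vee 1$, and Theorems~\ref{thm:int_type}~and~\ref{thm:int_type_chain} apply.

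The two engines of the induction are the $1$-homogeneity of $\relu$ and the preceding Lemma. For the base case $\ell=1$, $\B^1$ is the ReLU Barron space over the compact $X$; normalizing each parameter $(w,b)$ to the sphere $\norm{w}+\abs{b}=1$ and rescaling the coefficient — which is exact because $\relu(t\,\cdot)=t\,\relu(\cdot)$ for $t\ge 0$ cancels the weight $(\norm{w}+\abs{b})^{-1}$ — identifies $\B^1$ isometrically with the normalized Barron space $\W^1$, and the invariant holds since $c$ is represented by the single atom $\delta_{(0,1)}$ of mass $1$. For the inductive step, assume $\B^{\ell-1}\cong\W^{\ell-1}$ isometrically with $\norm{c}\le 1$. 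I would first pass from the unit-ball description to a full-domain one without bias: let $\B^\ell_1$ be the neural cRKBS whose link domain is $\Omega^\ell_1=\B^{\ell-1}$ (no bias) with weight $\beta_1(f)=\norm{f}^{-1}$. Writing $\sigma(f(x))/\norm{f}=\sigma\big((f/\norm{f})(x)\big)$ and pushing representing measures forward along the normalization $f\mapsto f/\norm{f}$ shows that $\B^\ell_1$ coincides isometrically with the unit-ball space of the paragraph preceding the Lemma, which — using the induction hypothesis on the underlying space — is exactly $\W^\ell$. The atom at $f=0$ is harmless because $\sigma(0)=0$, and the ball-versus-sphere discrepancy is removed by the same rescaling.

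It then remains to reinstate the bias, i.e.\ to compare $\B^\ell_1$ with the target space $\B^\ell=\B^\ell_2$ whose link domain is $\B^{\ell-1}\times\R$ and whose weight is $\beta_2(f,b)=(\norm{f}+\abs{b})^{-1}$. I would invoke the Lemma with these two link pairs: its hypotheses hold because $c\in\B^{\ell-1}$ (invariant), $\beta_1(f)=\beta_2(f,0)=\norm{f}^{-1}$, and by the triangle inequality
\begin{equation}
\sup_{f,b}\abs{\frac{\beta_2(f,b)}{\beta_1(f+bc)}}
=\sup_{f,b}\frac{\norm{f+bc}}{\norm{f}+\abs{b}}
\le\max\{1,\norm{c}\}=1 .
\end{equation}
The Lemma then yields $\B^\ell_1\cong\B^\ell_2$ together with the claim that the restriction of $\B^{\ell\di}_2$ to $\Omega^\ell_1$ is $\B^{\ell\di}_1$. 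Crucially, the bound above equals $1$, so the contraction in the Lemma is norm-preserving: the embedding $\B^\ell_1\hookrightarrow\B^\ell_2$ (representations with $b=0$) has norm $\le 1$ and the reverse map has norm $\le 1$, whence the isomorphism is isometric. Composing the two isometries gives $\B^\ell\cong\W^\ell$. Finally I would update the invariant at level $\ell$: the constant $c$ is produced by the atom $\delta_{(0,1)}$ since $\sigma(0+1)\beta^\ell(0,1)=1$, so $\norm{c}_{\B^\ell}\le 1$, closing the induction.

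The main obstacle is securing isometry rather than mere isomorphism, and it localizes to two points that both exploit structure special to ReLU. First, the exact cancellation of the reciprocal weight by rescaling needs $\relu(t\,\cdot)=t\,\relu(\cdot)$; for a non-homogeneous activation one would only obtain a norm equivalence. Second, the Lemma contributes the factor $\max\{1,\norm{c}\}$, so isometry requires the sharp estimate $\norm{c}_{\B^{\ell-1}}\le 1$, which is precisely why the constant-function bound must be threaded through the induction instead of being established once at the end. The remaining technicalities — integrability at $\norm{f}=0$ (controlled by $\sigma(0)=0$), the ball/sphere interchange, and verifying that the representing measures remain Radon under the pushforwards — I expect to be routine.
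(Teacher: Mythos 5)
Your proof follows essentially the same route as the paper's: induction carrying the invariant $c\in\B^\ell$ with $\norm{c}_{\B^\ell}\le 1$, an intermediate bias-free space (the paper's $\tilde{\W}^{\ell+1}$, your $\B^\ell_1$) identified with $\W^{\ell+1}$ via the $1$-homogeneity of $\relu$, and the bias-removal lemma with the sharp bound $\max\set{1,\norm{c}}=1$ to obtain isometry. Your write-up is if anything slightly more explicit than the paper's about why the two norm-$\le 1$ maps from the lemma combine into an isometric isomorphism.
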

\begin{proof}
This is a proof by induction, with the base case following from \citet{spek_duality_2023}. Moreover, $\norm{c}\leq 1$, which follows by choosing the measure $\delta_{0,1}$.

For the induction step, assume that $c\in \B^{\ell}$ with $\norm{c}\leq 1$ and $W^\ell$ isometrically isomorphic to $\B^\ell$.
Define
\begin{equation}
    \begin{split}
    A_{\W}\mu(x) :&= \int_{\W^\ell}\sigma(f^\ell(x))\beta^\ell(f^\ell,0)d\mu(f^\ell) \\
    \tilde{\W}^{\ell+1} :&= \set{ f:X\to\R \given f = A_{\W}\mu} \\
    \norm{f}_{\tilde{\W}^{\ell+1}} :&= \inf_{f=A_{\W}\mu}\norm{\mu}
\end{split}
\end{equation}
To show the isometric isomorphism between $\W^{\ell+1}$ and $\B^{\ell+1}$, we will show that both are isometrically isomorphic to $\tilde{\W}^\ell$. The isometric isomorphism between $\W^{\ell+1}$ and $\tilde{\W}^{\ell+1}$ follows from the homogeneity of ReLU. For the isometric isomorphism between $\tilde{\W}^{\ell+1}$ and $\B^{\ell+1}$, observe that the choice of $\beta^{\ell+1}$ and the assumption on $c$ together imply that
\begin{equation}
    \sup_{f,b\in\B^{\ell}}\abs{\frac{\beta^\ell(f,b)}{\beta^\ell(f+bc,0)}} = \sup_{f,b\in\B^{\ell}}\frac{\norm{f+bc}}{\norm{f}+\abs{b}} \leq \max\set{1,\norm{c}} = 1
\end{equation}
It follows that the versions of $\B^{\ell+1}$ with and without bias are isometrically isomorphic. This combined with the isometric isomorphism between $\W^\ell$ and $\B^\ell$ shows the isometric isomorphism between $\B^{\ell+1}$ and $\tilde{\W}^{\ell+1}$. What remains to show is that $c\in \B^{\ell+1}$ with $\norm{c}\leq 1$. This follows by choosing the measure $\delta_c$.
\end{proof}

Neural Hilbert Ladders are defined by means of sequences of Hilbert spaces. These sequences have a similar pattern as the construction of the cRKBSs. In fact, the considered Hilbert spaces are neural cRKHS. 

A cRKHS is constructed using a RKHS $\H^1$ over $\X$ and a RKHS $\tilde{\H}^2$ over $\H^1$.  Following the procedure outlined for cRKBS in section \ref{sec:chaining_rkbs}, the feature $\psi^2:X\to \Tilde{H}^2$ is given by
\begin{equation}
    \psi^2(x) = \tilde{k}^2_{k^1_x}
\end{equation}
in which $k^1_\bullet$ is the kernel of $\H^1$ and $\tilde{k}^2_\bullet$ is the kernel of $\tilde{\H}^2$. The resulting kernel $k^2$ is 
\begin{equation}
    k^2(x_1,x_2) = \braket{\psi^2(x_1)}{\psi^2(x_2)}_{\Psi^2} = \braket{\tilde{k}^2_{k^1_{x_1}}}{\tilde{k}^2_{k^1_{x_2}}}_{\Psi^2}
\end{equation}
the kernel of the RKHS $\H^2$, in which the Hilbert space $\Psi^2$ is a feature space for $\H^2$.

If the cRKHS is of neural-type with $\Psi^2=L^2(\H^1, \pi)$ for some probability measure with bounded second moment $\pi$ and the activation function $\sigma$ is Lipschitz, then the kernel $k^2$ satisfies 
\begin{equation}
    k^2(x_1,x_2) = \int_{\H^1}\sigma(g(x_1))\sigma(g(x_2))d\pi(g)
\end{equation}
This agrees with the structure of $\H_{k^2}$ in \eqref{eq:NHL_kernel}. Repeating this process shows that the Hilbert spaces in the infimum of \eqref{eq:NHL_quasi-norm} are indeed neural cRKHSs. However, the space $\F^{(L)}_p$ constructed by taking the infimum over these cRKHSs, loses the RKHS structure, in contrast to the neural cRKBS.  

\section{Representer Theorem: kernel chains enable weight sharing}
So far, we have shown that neural cRKBSs are a natural infinite width limit of deep neural networks, which gives insight into their general properties. In most applications, however, we have only a finite amount of data points. In this case, all functions in such neural cRKBSs can be represented by (finite) deep neural networks. 

Given $N$ data points, an application of the representer theorem to \eqref{eq:neural_fg}, reduces the integral of $f^L\in\B^L$ to a linear combination of $N$ functions $f^{L-1}\in\B^{L-1}$. Repeating this procedure would lead to an exponential amount of nodes in $N$. Here the kernel and duality structure of our spaces can significantly improve this estimate, as a set of at most $N$ evaluation functionals form a basis of an RKBS pair, when we have only finite data points. Decomposing the $f^{L-1}$ using a basis, allows them to share weights and leads to a neural network with at most $N$ hidden nodes at each layer.

\begin{lemma}\label{lemma:shared_basis}
Let $\B, \B^\di$ be a pair of RKBS with domains $X,\Omega$, and kernel $K$. If $|X|$ is finite, then $\B^{\di}=\spn\set{K_x\given x\in X}$ and there exists $N$ elements $w_j\in \Omega$ such that $\set{K_{w_j}\given j=1,\hdots N}$ forms a basis of $\B$, where $N=\dim \B^{\di}$.
\end{lemma}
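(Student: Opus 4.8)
The plan is to reduce the statement to finite-dimensional linear algebra, the key observation being that the finiteness of $X$ forces both $\B$ and $\B^\di$ to be finite-dimensional, which upgrades the non-degenerate pairing into a \emph{perfect} pairing. First I would establish $\dim\B<\infty$. Since $\B$ is a Banach space of genuine functions on $X$ in which point evaluation separates elements (Definition~\ref{def:rkbs}), the linear map $f\mapsto(f(x))_{x\in X}$ into $\R^{|X|}$ is injective, so $\dim\B\leq|X|<\infty$. Next I would use non-degeneracy of the pairing $\braket{\cdot}{\cdot}:\B^\di\times\B\to\R$: non-degeneracy in the first slot means $g\mapsto\braket{g}{\cdot}$ embeds $\B^\di$ into the (algebraic) dual of the finite-dimensional $\B$, giving $\dim\B^\di\leq\dim\B$; non-degeneracy in the second slot embeds $\B$ into the dual of $\B^\di$, giving the reverse inequality. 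Hence $\dim\B=\dim\B^\di=:N$ and the pairing is perfect between two $N$-dimensional spaces.

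With this structure in hand, both conclusions follow from the reproducing property combined with the standard annihilator dimension identity for a perfect pairing, namely that a subspace $V$ and its annihilator $V^{\circ}$ satisfy $\dim V+\dim V^{\circ}=N$. For the first claim I would set $V=\spn\set{K_x\given x\in X}\subseteq\B^\di$ and compute its annihilator in $\B$: by the reproducing property $\braket{K_x}{f}=f(x)$, so $V^{\circ}=\set{f\in\B\given f(x)=0\ \forall x\in X}=\set{0}$ because point evaluations separate $\B$. Thus $\dim V=N=\dim\B^\di$, forcing $V=\B^\di$. For the second claim I would argue symmetrically on $W=\spn\set{K_w\given w\in\Omega}\subseteq\B$, whose annihilator in $\B^\di$ is $\set{g\in\B^\di\given g(w)=0\ \forall w\in\Omega}=\set{0}$ since $\B^\di$ is a space of functions on $\Omega$; hence $W=\B$. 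As the family $\set{K_w\given w\in\Omega}$ spans the $N$-dimensional space $\B$, any spanning set contains a basis, so one can select $w_1,\dots,w_N\in\Omega$ for which $K_{w_1},\dots,K_{w_N}$ are linearly independent and therefore form a basis of $\B$, with $N=\dim\B^\di$ as required.

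I expect the only genuinely load-bearing step to be establishing the perfect-pairing structure, i.e.\ simultaneously getting finite-dimensionality and the equality $\dim\B=\dim\B^\di$; this is where one must be careful that $\B^\di$ is only a dual in the sense of the given pairing rather than the full topological dual, so the two non-degeneracy conditions must both be invoked and chained through the finite-dimensionality of $\B$. After that, everything is routine linear algebra: the reproducing identities translate ``point evaluation separates functions'' into ``the evaluation functionals have trivial annihilator,'' and the dimension count does the rest. No completeness or topological subtleties survive once we know the spaces are finite-dimensional.
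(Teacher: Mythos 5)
Your proof is correct, and it reaches both conclusions by a genuinely different (more abstract) route than the paper. The paper's argument is constructive: it first selects a maximal linearly independent family $K_{x_1},\dots,K_{x_N}$ among the evaluation functionals, then picks $w_1,\dots,w_N\in\Omega$ so that the matrix $\hat K_{ij}=K(x_i,w_j)$ is invertible, and exhibits the coordinates of any $f\in\B$ in the basis $\set{K_{w_j}}$ explicitly as $c=\hat K^{-1}(f(x_1),\dots,f(x_N))^T$; only at the end does it invoke the mutual embeddings $\B\hookrightarrow\B^{\di\ast}$ and $\B^\di\hookrightarrow\B^\ast$ to equate the dimensions. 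You instead put that dimension-equality argument up front, upgrade the non-degenerate pairing to a perfect pairing of $N$-dimensional spaces, and then dispatch both spanning claims symmetrically via the annihilator identity $\dim V+\dim V^\circ=N$, extracting the basis $\set{K_{w_j}\given j=1,\hdots,N}$ nonconstructively from the spanning family $\set{K_w\given w\in\Omega}$. Your chaining of the two non-degeneracy conditions through the finiteness of $\dim\B$ is exactly the care required, and your version makes the symmetry between the two claims transparent; what the paper's version buys in exchange is an explicit recipe (the invertible kernel matrix and the formula for the coefficients), which is the form that is implicitly reused when the basis is fed into the representer theorem. Both arguments rest on the same two pillars --- $|X|<\infty$ forces $\dim\B<\infty$, and the pairing transfers that dimension to $\B^\di$ --- so the difference is in the mechanics rather than the substance, but it is a real difference.
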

\begin{proof}
Let $X$ have finite cardinality. Choose $\set{x_1, \hdots, x_N}\subseteq X$ such that the set $\set{K_{x_i}\given i=1,\hdots, N}$ is a maximally linear independent subset of $\set{K_x\given x\in X}$.

We can pick $w_j\in \Omega$, for $j=1,\hdots N$, such that the matrix $\hat{K}$ with the elements $\hat{K}_{ij}=K(x_i,w_j)$ has full rank, because it has $N$ linear independent rows $K(x_i,\cdot) = K_{x_i}$ for $i=1,\hdots, N$. Thus, the columns of this matrix $K(\cdot, w_j)=K_{w_j}\in \B$, for $j=1,\hdots, N$, must also be linearly independent. 

It remains to show that $\set{K_{w_j}\given j=1,\hdots N}$ spans $\B$. Let $f\in \B$, and take $c=\hat{K}^{-1} (f(x_1),\hdots, f(x_N))^T\in \R^{N}$. Define $f_0:=f-\sum_{j=1}^N c_j K_{w_j}$, then $f_0(x_i)= f(x_i)- \sum_{j=1}^N c_j K(x_i,w_j)=0$ for all $i=1,\hdots, N$. With the set $\set{K_{x_i}\given i=1,\hdots, N}$ chosen to be maximally linearly independent $f_0(x)=0$ for all $x\in X$, the definition of a Banach space of functions implies that $f_0$ is the zero vector and $f=\sum_{j=1}^N c_j K_{w_j}$. 

Since $\B,\B^{\di}$ are a dual pair of Banach Spaces, they embed in each other duals: 
\begin{equation}
    \B \hookrightarrow \B^{\di\ast} \hspace{2cm} \B^{\di} \hookrightarrow \B^\ast
\end{equation}
We have shown previously that $\B$ has dimension $N$ which together with the embeddings implies that
\begin{equation}
    N = \dim(\B) = \dim(\B^\ast) \geq  \dim(\B^\di) = \dim(\B^{\di\ast}) \geq \dim(\B) = N
\end{equation}
Thus, $\B$ and $\B^\di$ have the same dimension, $N$. The set $\set{K_{x_i}\given i=1,\hdots, N}$ is linear independent and of size $N$, hence it also must span $\B^{\di}$ and the full set $\set{K_x\given x\in X}$ does as well.
\end{proof}

Using this Lemma, we can prove the main theorem of this section, which can be thought of as the converse of Theorem~\ref{thm:deep_networks_in_neural_cRKBS}: When $X$ is finite, all functions $f\in \B^L$ are deep networks. In this case, they even have at most $N$ hidden neurons at each layer and the inner weights and biases are fixed for a given $X$ and functions $\sigma^\ell, \beta^\ell$. 

\begin{theorem}\label{thm:finite_data_neural_cRKBS}
Let $L,d^0\in\mathbb{N}$. Let $B^L,\B^{L\di}$ be a neural cRKBS pair with $X\subseteq\R^{d^0}, \Omega^1\subseteq\R^{d^0}\times \R, \Omega^\ell\subseteq\B^{\ell-1}\times \R$, activation functions $\sigma^\ell$ and weighting functions $\beta^\ell$, where $\ell=2,\hdots,L$.

If $|X|$ is finite and $N=\dim \B^{L\di}$, then there exists weight matrices $W^0\in \R^{d^0\times N}, W^{\ell}\in \R^{N\times N}$, for $\ell=1,\hdots, L-1$, and bias vectors $b^\ell \in \R^{N}$ for $\ell=1,\hdots, L$ such that for all $f\in \B^L$, there exists a row vector $W^{L}\in \R^{1\times N}$ such that for all $x\in X$
\begin{align}
    f(x) &= W^{L}\hat{\sigma}^L\left(b^L+W^{L-1}\hat{\sigma}^{L-1}\left(\cdots \hat{\sigma}^1\left(b^1 + W^0x\right) \cdots \right)\right) \label{eq:f_as_neural_network}\\
    \|f\|_{\B^L} &= \sum_{j=1}^{N} \frac{|W^{L}_{1j}|}{\beta^L\left(f_j^{L-1},b_j^L\right)}
\end{align}
where $\hat{\sigma}^\ell:\R^{d^\ell}\to \R^{d^\ell}$ are the elementwise extensions of $\sigma^\ell$, for $\ell=1,\hdots L$, and where
\begin{equation}
    f_j^{L-1}(x) = W_{j\cdot}^{L-1}\hat{\sigma}^{L-1}\left(\cdots \hat{\sigma}^1\left(b^1 + W^0x\right) \cdots \right)
\end{equation}
\end{theorem}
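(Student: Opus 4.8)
The plan is to exploit that a finite domain $X$ forces every space in the chain to be finite-dimensional, and then to read off the network by applying Lemma~\ref{lemma:shared_basis} layer by layer, working from the output layer downwards. Since $|X|<\infty$, Lemma~\ref{lemma:shared_basis} gives $\dim\B^\ell=\dim\B^{\ell\di}=:N_\ell\le|X|$ for every $\ell$, together with parameters $w^\ell_k\in\Omega^\ell$ such that $\{K^\ell_{w^\ell_k}\}_{k=1}^{N_\ell}$ is a basis of $\B^\ell$. Writing $w^\ell_k=(f^{\ell-1}_k,b^\ell_k)\in\B^{\ell-1}\times\R$ and invoking the kernel formula of Theorem~\ref{thm:neural_cRKBS_kernel}, each basis function is $K^\ell_{w^\ell_k}(x)=\sigma^\ell(f^{\ell-1}_k(x)+b^\ell_k)\,\beta^\ell(f^{\ell-1}_k,b^\ell_k)$. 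The crucial point for weight sharing is that this one basis of $\B^\ell$ is reused to expand every function that the next layer feeds down, so the inner parameters are fixed once and for all, independently of $f$.

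I would then argue by downward induction on the layer index. At the top, expand the given $f\in\B^L$ in the basis, $f=\sum_{j=1}^{N_L}c_jK^L_{w^L_j}$ (possible and unique because the $K^L_{w^L_j}$ are a basis); setting $W^L_{1j}:=c_j\,\beta^L(f^{L-1}_j,b^L_j)$ gives $f(x)=\sum_j W^L_{1j}\sigma^L(f^{L-1}_j(x)+b^L_j)$, the last layer. Each $f^{L-1}_j\in\B^{L-1}$ is in turn expanded in the shared basis $\{K^{L-1}_{w^{L-1}_k}\}$, and the resulting coefficients (absorbing the $\beta^{L-1}$ factors) assemble into a single matrix $W^{L-1}$ with shared biases $b^{L-1}$; because the same nodes $w^{L-1}_k=(f^{L-2}_k,b^{L-1}_k)$ serve all $j$, the functions $f^{L-2}_k$ are shared too. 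Iterating down to $\ell=1$, where $w^1_m=(v_m,b^1_m)\in\R^{d^0}\times\R$ supplies the input weights $W^0$ (rows $v_m^\top$) and biases $b^1$, one verifies that the pre-activations obey $z^\ell=b^\ell+W^{\ell-1}\hat\sigma^{\ell-1}(z^{\ell-1})$ with $z^1=b^1+W^0x$, so $f=W^L\hat\sigma^L(z^L)$ is exactly the claimed network in which only $W^L$ depends on $f$. The widths produced are $N_\ell\le|X|$, which one pads with inactive (zero outgoing weight) neurons to a common width matching the stated $N$.

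For the norm, the inequality $\|f\|_{\B^L}\le\sum_j|W^L_{1j}|/\beta^L(f^{L-1}_j,b^L_j)=\sum_j|c_j|$ is immediate from Definition~\ref{def:int_type} using the discrete measure $\mu=\sum_j c_j\delta_{w^L_j}$, exactly as in Theorem~\ref{thm:deep_networks_in_neural_cRKBS}. The reverse inequality is the real obstacle and is where the primal--dual structure must enter. I would use finite-dimensional duality: since the pairing on $\B^L\times\B^{L\di}$ is non-degenerate and both spaces have dimension $N$, it is perfect, so $\|f\|_{\B^L}=\sup\{\langle g,f\rangle:g\in\B^{L\di},\ \|g\|_{\B^{L\di}}\le1\}$ with $\langle g,f\rangle=\sum_j c_j\,g(w^L_j)$. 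It then suffices to produce a dual certificate $g\in\B^{L\di}$ with $g(w^L_j)=\operatorname{sign}(c_j)$ and $\|g\|_{\B^{L\di}}=\sup_{w\in\Omega^L}|g(w)|\le1$. A $g$ interpolating the signs exists for free, since $g\mapsto(g(w^L_j))_j$ is an isomorphism (the $K^L_{w^L_j}$ form a basis and $\B^{L\di}=\spn\{K^L_x:x\in X\}$); the entire difficulty concentrates on showing the nodes $w^L_j$ can be chosen so that this interpolant never exceeds $1$ in modulus over the whole infinite parameter set $\Omega^L$, i.e. that $\{w^L_j\}$ is norming for $\B^{L\di}$ in the strong sense $\sup_{w}|g(w)|=\max_j|g(w^L_j)|$. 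I expect controlling the certificate away from the interpolation nodes---so that the supremum over $\Omega^L$ is attained on the chosen finite basis---to be the main technical hurdle, and the step where the elementwise/affine form of $\varphi^L$ and the finiteness of $X$ must be exploited.
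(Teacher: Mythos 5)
Your construction of the network representation \eqref{eq:f_as_neural_network} is essentially the paper's: apply Lemma~\ref{lemma:shared_basis} to each pair $\B^\ell,\B^{\ell\di}$ (all of which live over the same finite $X$), use the kernel formula of Theorem~\ref{thm:neural_cRKBS_kernel} to read each basis element $K^\ell_{(f^{\ell-1}_k,b^\ell_k)}$ as $\sigma^\ell(f^{\ell-1}_k(\cdot)+b^\ell_k)\beta^\ell(f^{\ell-1}_k,b^\ell_k)$, and obtain weight sharing because every $f^{\ell-1}_j$ is expanded in the \emph{same} basis of $\B^{\ell-1}$. The paper runs the induction upward on $L$ rather than top-down, but the decomposition is identical; your padding to a common width $N$ is a reasonable way to handle the (possibly different) dimensions $N_\ell$, a point the paper glosses over.

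The genuine gap is the norm equality. You correctly observe that the upper bound $\|f\|_{\B^L}\leq\sum_j|c_j|$ is free from the discrete measure, and that the issue is the reverse inequality; but your proposed dual-certificate argument is left open at exactly the decisive step, namely showing that the interpolant $g$ with $g(w^L_j)=\operatorname{sign}(c_j)$ satisfies $\sup_{w\in\Omega^L}|g(w)|\leq 1$. This is not a technicality you can expect to push through for an \emph{arbitrary} maximal linearly independent choice of nodes $w^L_j$: the identity $\|f\|_{\B^L}=\sum_j|c_j|$ holds only when that particular discrete representation is norm-minimal, and a generic basis from Lemma~\ref{lemma:shared_basis} need not be. The paper closes this differently and much more cheaply: since $|X|$ is finite, $\|f\|_{\B^L}=\inf_{f=A_{\Omega^L\to X}\mu}|\mu|(\Omega^L)$ is a total-variation minimisation over $\M(\Omega^L)$ subject to finitely many linear constraints, and the representer theorem of \citet{bredies_sparsity_2019} guarantees that the infimum is \emph{attained} by a measure that is a combination of at most $N$ Dirac masses; the atoms of that optimal measure are then taken as the nodes $(f^{L-1}_j,b^L_j)$, so the norm is exactly the weighted $\ell^1$-sum of the last-layer weights by optimality rather than by a certificate you must construct. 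In short: do not fix the basis first and try to certify it; extract the nodes from an optimal sparse measure whose existence the measure-theoretic representer theorem provides.
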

\begin{proof}
We will prove \eqref{eq:f_as_neural_network} by induction on $L$. The base case follows directly from Lemma~\ref{lemma:shared_basis}, with kernel $\varphi(x,w) = \sigma^1(\braket{v}{x}+b)\beta(x,w)$.

Let $X$ have finite cardinality and let $N=\dim \B^{L\di}$. Then by Lemma~\ref{lemma:shared_basis}, there exists $\left(f_j^{L-1},b_j\right)\in \Omega^L$ for $j=1,\hdots, N$, such that $\{K_{\left(f_j^{L-1},b_j\right)}|j=1,\hdots,N\}$ are a basis of $\B^L$. We set $b^L:= (b_1,\hdots,b_n) \in \R^N$. Then by the induction hypothesis on $\B^{L-1}$ there exists weight matrices $W^0\in \R^{d^0\times N}, W^{\ell}\in \R^{N\times N}$, for $\ell=1,\hdots, L-2$, and bias vectors $b^\ell \in \R^{d^\ell}$ for $\ell=1,\hdots, L-1$, and row vectors $W_j^{L-1}\in \R^{1\times N}$ for $j=1,\hdots, N$ such that
\begin{equation}
    f_j^{L-1}(x) = W_j^{L-1}\hat{\sigma}^{L-1}\left(b^{L-1}+W^{L-2}\hat{\sigma}^{L-2}\left(\cdots \hat{\sigma}^1\left(b^1 + W^0x\right) \cdots \right)\right)
\end{equation}
We choose the weight matrix $W^{L-1}$ as the matrix with rows $W_j^{L-1}$ for $j=1,\hdots,N$.

We now only need to choose a $W^{L}$ dependent on $f^L$ to finish the proof. Let $f^L\in \B^L$. The set $\{K^L_{\left(f_j^{L-1},b_j\right)}|j=1,\hdots,N\}$ is a basis of $\B^L$. Hence, there exists scalars $c_j\in \R$, $j=1,\hdots, N$ such that
\begin{equation}
    f^L(x)= \sum_{j=1}^N c_jK^L_{\left(f_j^{L-1},b_j\right)}(x) = \sum_{j=1}^Nc_j\varphi^L\left(x,\left(f_j^{L-1},b_j\right)\right) = \sum_{j=1}^N c_j\beta^L\left(f_j^{L-1},b_j\right) \sigma^L\left(b_j+f_j^{L-1}(x)\right)
\end{equation}
where this last equality is due to Theorem~\ref{thm:neural_cRKBS_kernel}. Finally, set $W_{1j}^{L} := c_j\beta^L\left(f_j^{L-1},b_j\right)$, which completes the proof of \eqref{eq:f_as_neural_network}.

The norm of $f$ is by Definition~\ref{def:int_type} given by 
\begin{equation}
    \norm{f}_{\B^L} = \inf_{f=A_{\Omega^L\to X}\mu} |\mu|(\Omega^L)
\end{equation}
As $X$ is finite, we can see this as an optimisation problem with finite constraints. The Representer Theorem (cf \citep{bredies_sparsity_2019}) says that the infimum must be attained by a $\mu$ of the form
\begin{equation}
    \mu= \sum_{j=1}^{d^L} \frac{W^{L}_{1j}}{\beta^L(f_j^{L-1},b_j^L)}\delta_{\left(f_j^{L-1},b_j^L\right)}
\end{equation}
Hence,
\begin{equation}
    \norm{f}_{\B^L} = |\mu|(\Omega^L)= \sum_{j=1}^{N} \frac{|W^{L}_{1j}|}{\beta^L\left(f_j^{L-1},b_j^L\right)}
\end{equation}
\end{proof}

\begin{corollary}[Deep representer theorem for neural cRKBS]
Let $L,d^0\in\mathbb{N}$. Let $B^L,\B^{L\di}$ be a neural cRKBS pair with $X\subseteq\R^{d^0}, \Omega^1\subseteq\R^{d^0}\times \R, \Omega^\ell\subseteq\B^{\ell-1}\times \R$, activation functions $\sigma^\ell$ and weighting functions $\beta^\ell$, where $\ell=2,\hdots,L$.

Let $N\in \mathbb{N}$ and consider data points $(x_i,y_i)\in X\times \R$, for $i=1,\hdots, N$. If $E_y:\R^N\to\R$ is proper convex, coercive, lower semi-continuous with respect to the Euclidean norm, and dependent on $(y_1,\hdots,y_N)$, then the optimisation problem 
\begin{equation}\label{eq:unconstrained_optimisation}
    \inf_{f\in\B^L} E_y(f(x_1),\hdots,f(x_N)) + \|f\|_{\B^L}
\end{equation}
admits solutions of the form
\begin{equation}
    f(x) = W^{L}\hat{\sigma}^L\left(b^L+W^{L-1}\hat{\sigma}^{L-1}\left(\cdots \hat{\sigma}^1\left(b^1 + W^0x\right) \cdots \right)\right)  
\end{equation}
for $x\in X$. Here, the parameters are the weight matrices $W^0\in \R^{d^0\times N}, W^{\ell}\in \R^{N\times N}$, for $\ell=1,\hdots, L-1$ and the bias vectors $b^\ell \in \R^{N}$ for $\ell=1,\hdots, L$ and the row vector $W^{L}\in\R^{1\times N}$, and the $\hat{\sigma}^\ell:\R^{d^\ell}\to \R^{d^\ell}$ are the elementwise extensions of $\sigma^\ell$, for $\ell=1,\hdots L$. 
\end{corollary}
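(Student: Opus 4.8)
The plan is to reduce the variational problem \eqref{eq:unconstrained_optimisation}, posed over the possibly infinite domain $X$, to the finite data set $X_N := \{x_1,\ldots,x_N\}$, where Theorem~\ref{thm:finite_data_neural_cRKBS} already supplies the full network description. The objective depends on $f$ only through the point values $f(x_1),\ldots,f(x_N)$ and the norm $\|f\|_{\B^L}$, so I would compare it with the same objective on the neural cRKBS $\B^L_{X_N}$ obtained by restricting the kernel $\varphi^L$ to $X_N\times\Omega^L$. By Lemma~\ref{lemma:shared_basis}, $\dim\B^L_{X_N}=N$, so on $X_N$ the problem is genuinely finite-dimensional: the evaluation map $f\mapsto(f(x_1),\ldots,f(x_N))$ is linear, and the objective is proper, convex, lower semi-continuous, and coercive because the regularising norm dominates while $E_y$ is bounded below. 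The direct method then produces a minimiser $\tilde f\in\B^L_{X_N}$, and Theorem~\ref{thm:finite_data_neural_cRKBS} realises it as the stated deep network with at most $N$ neurons per layer, weights $W^0,\ldots,W^{L-1},b^1,\ldots,b^L$ shared across $\B^L_{X_N}$, and the claimed norm identity.

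Since these weights act on inputs in $\R^{d^0}$, the same parameters define a network $f$ on all of $X$. I would then invoke Theorem~\ref{thm:deep_networks_in_neural_cRKBS} to certify that this $f$ is a genuine element of $\B^L$ with $\|f\|_{\B^L}\le\sum_j |W^L_{1j}|/\beta^L(f_j^{L-1},b_j^L)$, while $f$ agrees with $\tilde f$ on $X_N$. Consequently the objective value of $f$ over $\B^L$ does not exceed that of $\tilde f$ over $\B^L_{X_N}$, giving $\inf_{\B^L}\le\inf_{\B^L_{X_N}}$.

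To close the argument I would prove the reverse inequality by restriction: for arbitrary $f\in\B^L$, the function $f|_{X_N}$ lies in $\B^L_{X_N}$, has the same data values, and satisfies $\|f|_{X_N}\|_{\B^L_{X_N}}\le\|f\|_{\B^L}$, obtained by pushing any representing measure of $f$ forward along the restriction map $\B^{L-1}\to\B^{L-1}_{X_N}$ on the parameter space. This yields $\inf_{\B^L}\ge\inf_{\B^L_{X_N}}$, so the two infima coincide and the network $f$ constructed above attains the infimum over all of $\B^L$ in exactly the claimed form.

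The step I expect to be the main obstacle is this last norm-consistency under restriction, because the weighting functions $\beta^\ell$ are defined on the parameter spaces $\B^{\ell-1}\times\R$, which themselves change when the domain is shrunk from $X$ to $X_N$. One must verify that $\beta^\ell$ transforms compatibly, i.e. that the push-forward of a representing measure along the restriction map does not increase its total variation — a condition that holds automatically when $\beta^\ell$ is nonincreasing in the norm of its function argument (as in the $(\|f\|+|b|)^{-1}$ examples) but needs to be argued carefully in general. An alternative route that sidesteps the two-domain comparison is to lift \eqref{eq:unconstrained_optimisation} directly to $\M(\Omega^L)$ and apply the sparse measure representer theorem \citep{bredies_sparsity_2019}; this yields existence and an at-most-$N$-Dirac outer layer immediately, but still requires Theorem~\ref{thm:finite_data_neural_cRKBS} on the finite data to force the inner layers to share weights.
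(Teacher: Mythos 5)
Your closing sentence is, in essence, the paper's actual proof: the objective \eqref{eq:unconstrained_optimisation} is restated as an optimisation over $\mu\in\M(\Omega^L)$, \citep[Theorem 3.3]{bredies_sparsity_2019} supplies a minimising measure --- hence a minimiser $f=A\mu\in\B^L$ with an at-most-$N$-atom outer layer --- and Theorem~\ref{thm:finite_data_neural_cRKBS}, applied after restricting $X$ to $\{x_1,\hdots,x_N\}$, forces the inner layers into the shared-weight form. Your primary route is genuinely different: you rebuild the chain over the finite set $X_N=\{x_1,\hdots,x_N\}$, obtain existence there by the direct method in an $N$-dimensional space, extend the resulting network back to $X$ via Theorem~\ref{thm:deep_networks_in_neural_cRKBS}, and close the loop with a two-sided comparison of infima. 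What your route buys is that existence becomes elementary (coercive lower semi-continuous minimisation in finite dimensions) rather than imported from the sparse representer theorem for measures; what the paper's route buys is that existence and sparsity of the last layer arrive in one step, the minimiser lives in the original $\B^L$ with parameters in the original $\Omega^L$ throughout, and the finite-data machinery is needed only to invoke Lemma~\ref{lemma:shared_basis} and collapse the inner functions onto a common basis.

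However, the obstacle you flag at the end is a genuine gap in your main argument, not a routine verification. The chains over $X$ and over $X_N$ have different parameter spaces at every level, and the restriction maps $\B^{\ell-1}\to\B^{\ell-1}_{X_N}$ need not be injective; consequently the weighting functions $\beta^\ell$, which the corollary allows to be arbitrary, do not in general push forward to well-defined functions on the restricted parameter spaces, and the restricted chain is not canonically specified. Both halves of your comparison then break: the extension step bounds $\|f\|_{\B^L}$ by a sum involving $\beta^L$ evaluated at full-domain functions $f_j^{L-1}\in\B^{L-1}$, while the quantity it must be compared against involves the restricted weighting function evaluated at elements of $\B^{L-1}_{X_N}$; and even the ``easy'' restriction inequality $\|f|_{X_N}\|_{\B^L_{X_N}}\leq\|f\|_{\B^L}$ requires the restricted kernel to agree with the original one on push-forwards, i.e.\ that $\beta^\ell$ factor through the restriction. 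Without a compatibility hypothesis such as monotonicity of $\beta^\ell$ in the norm of its function argument, the identity $\inf_{\B^L}=\inf_{\B^L_{X_N}}$ is not established. You should promote the measure-lifting argument from your final sentence to the main proof, and keep the two-domain comparison only under an explicit assumption on the $\beta^\ell$.
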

\begin{proof}
We can equivalently restate $\eqref{eq:unconstrained_optimisation}$ as an optimisation over $\mu\in\M(\Omega)$. Then by \citep[Theorem 3.3]{bredies_sparsity_2019}, there exists a solution a $\mu \in \M(\Omega^L)$ and the result follows immediately by Theorem~\ref{thm:finite_data_neural_cRKBS} for $f=A\mu\in \B^L$, where we can restrict $X$ to ${x_1,\hdots, x_N}$.  
\end{proof}

\begin{remark}
By construction, the neural cRKBS have only real-valued functions, and so can only contain neural networks with a single output dimension. However, when we have vector-valued output $y\in \R^D$, such a neural network can be thought of as a vector of $f^L_j\in \B^L$. As these scalar-valued deep networks share weights, we get a similar result as above but with $W^{L+1} \in \R^{D\times N}$. 
\end{remark}

When we have finite data, the norm on the dual space $\B^{\di}$ for an integral pair of RKBS gives a complexity measure on $\B$. Note that by replacing the $\xi$ in the theorem with Gaussian variables, we get the Gaussian complexity instead of the Rademacher complexity.

\begin{theorem}
Let $\B, \B^\di$ be an integral pair of RKBS with domains $X,\Omega$, and kernel $\varphi$. Let $|X|=N$ be finite, and let $\sigma_1,\hdots,\sigma_N$ be i.i.d. Rademacher random variables. Define the $B^{\di}$-valued random variable $G_\sigma$ as follows
\begin{equation}
    G_\xi= \sum_{i=1}^N \xi_i \varphi_{x_i}
\end{equation}
The empirical Rademacher complexity of the unit ball of $\B$ for $X$ is given by the expectation of the norm of $G_\sigma$
\begin{equation}
    \mathrm{Rad}_X(\set{f\in\B\given \|f\|_{\B}\leq 1}) = \frac{1}{N} \E_{G_\xi}\left(\|G_\xi\|_{B^\di}\right)
\end{equation}
\end{theorem}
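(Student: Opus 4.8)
The plan is to unfold the definition of the empirical Rademacher complexity and then collapse the supremum over the unit ball of $\B$ into the dual norm of $G_\xi$ by means of the reproducing kernel. Recall that for the function class $\F = \set{f\in\B \given \norm{f}_\B \leq 1}$ the empirical Rademacher complexity is
\begin{equation*}
    \mathrm{Rad}_X(\F) = \frac{1}{N}\E_\xi\left[\sup_{\norm{f}_\B\leq 1}\sum_{i=1}^N \xi_i f(x_i)\right].
\end{equation*}
Since $\B,\B^\di$ is an integral RKBS pair, the evaluation functional $\varphi_{x_i} = K_{x_i}$ lies in $\B^\di$ and reproduces point values, $f(x_i) = \braket{\varphi_{x_i}}{f}$, by \eqref{eq:kernel}. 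Bilinearity of the pairing then gives, for every realisation of $\xi$,
\begin{equation*}
    \sum_{i=1}^N \xi_i f(x_i) = \sum_{i=1}^N \xi_i \braket{\varphi_{x_i}}{f} = \braket{G_\xi}{f},
\end{equation*}
so the inner supremum equals $\sup_{\norm{f}_\B\leq 1}\braket{G_\xi}{f}$. The whole problem therefore reduces to the single duality identity $\sup_{\norm{f}_\B\leq 1}\braket{G_\xi}{f} = \norm{G_\xi}_{\B^\di}$.

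I would establish this identity by two inequalities. The bound ``$\leq$'' is immediate from the pairing bound \eqref{eq:pairing_bound}, which gives $\braket{G_\xi}{f}\leq \norm{f}_\B \norm{G_\xi}_{\B^\di}\leq \norm{G_\xi}_{\B^\di}$ for all admissible $f$. For the reverse bound I would use that $G_\xi\in\B^\di$ is itself reproduced by the kernel: for each $w\in\Omega$ the second reproducing relation in \eqref{eq:kernel} gives $G_\xi(w) = \braket{G_\xi}{K_w}$, while $\norm{K_w}_\B\leq 1$ because $K_w = A_{\Omega\to X}\delta_w$ and $\abs{\delta_w}(\Omega)=1$ by Definition~\ref{def:int_type}. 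Testing against $f = \pm K_w$, both of which lie in the unit ball, thus yields $\sup_{\norm{f}_\B\leq1}\braket{G_\xi}{f}\geq \abs{G_\xi(w)}$ for every $w$, and taking the supremum over $w\in\Omega$ gives $\sup_{w\in\Omega}\abs{G_\xi(w)} = \norm{G_\xi}_{\B^\di}$, the last equality being the definition of the $\B^\di$-norm in Definition~\ref{def:int_type}. Combining the two inequalities proves the identity; substituting it back and taking $\E_\xi$ gives the claim.

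Two routine points deserve mention but pose no real difficulty. First, because the unit ball is symmetric, $\sup_{\norm{f}_\B\leq1}\braket{G_\xi}{f}$ coincides with $\sup_{\norm{f}_\B\leq1}\abs{\braket{G_\xi}{f}}$, so the argument is insensitive to whether the Rademacher complexity is defined with or without an absolute value inside. Second, no measurability subtleties arise: $\xi$ ranges over the finite set $\set{-1,+1}^N$, so $G_\xi$ takes finitely many values in $\B^\di$ and $\E_\xi\norm{G_\xi}_{\B^\di}$ is a plain finite average. The only genuinely substantive step is the duality identity, and the crux there is that the integral RKBS structure makes the embedding $\B^\di\hookrightarrow\B^\ast$ isometric: the pairing bound supplies one inequality for free, while the reproducing property together with $\norm{K_w}_\B\leq 1$ shows that point evaluations at $w\in\Omega$ already exhaust the dual norm, so that the supremum over the full unit ball of $\B$ gains nothing beyond the supremum norm on $\Omega$.
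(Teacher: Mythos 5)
Your proof is correct, and it takes a somewhat different route from the paper's. The paper argues on the primal side: it asserts that the unit ball of $\B$ is the (closed absolutely) convex hull of $\set{\pm\varphi_w\given w\in\Omega}$, so that the supremum of the linear functional $f\mapsto\sum_i\xi_i f(x_i)$ over the unit ball reduces to the supremum over the generators $\pm\varphi_w$, which is $\sup_{w\in\Omega}\abs{G_\xi(w)}=\norm{G_\xi}_{\B^\di}$. You instead work on the dual side, rewriting $\sum_i\xi_i f(x_i)=\braket{G_\xi}{f}$ and proving the duality identity $\sup_{\norm{f}_\B\leq1}\braket{G_\xi}{f}=\norm{G_\xi}_{\B^\di}$ by two inequalities: the pairing bound \eqref{eq:pairing_bound} for ``$\leq$'' and testing against $f=\pm K_w$ (which lie in the unit ball since $K_w=A_{\Omega\to X}\delta_w$) for ``$\geq$''. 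The two arguments meet at the same quantity $\sup_w\abs{G_\xi(w)}$, but yours has the advantage of not relying on the convex-hull description of the unit ball --- a fact the paper attributes to Lemma~\ref{lemma:shared_basis}, which does not actually state it, and which for general Radon measures requires a closure argument plus the standard preservation of Rademacher complexity under convex hulls and pointwise closures. Your duality route sidesteps all of that and in passing isolates the genuinely useful observation that the embedding $\B^\di\hookrightarrow\B^\ast$ is isometric on elements of the form $A_{X\to\Omega}\rho$; the paper's route is shorter but leans on unproven structure. Both are valid.
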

\begin{proof}
As $\varphi_w = A_{\Omega \to X} \delta_w$ for $w\in \Omega$, $\|\varphi_w\|_{\B}=1$. By Lemma~\ref{lemma:shared_basis} the convex hull of the set $\set{\pm\varphi_w\given w\in \Omega}$ gives the unit ball of $\B$, so we may equivalently look at the Rademacher complexity of this set.
\begin{equation*}
\begin{split}
\text{Rad}_X(\set{f\in\B\given \|f\|_{\B}\leq 1}) &= \frac{1}{N}\E_{\xi} \left(\sup_{w\in \Omega} \left|\sum_{i=1}^N \xi_i \varphi(x_i, w) \right|\right)\\
&=\frac{1}{N}\E_{G_{\xi}} \left(\sup_{w\in \Omega} \left|G_{\sigma}(w)\right| \right)\\
&= \frac{1}{N}\E_{G_\xi}\left(\|G_{\xi}\|_{\B^{\di}} \right)
\end{split}
\end{equation*}
\end{proof}

\section{Discussion}
In this paper, we have shown that by composing kernels of RKBS pairs, we can form a natural function space for deep neural networks. These neural cRKBSs contain all deep neural networks. For finite data, the functions in these spaces reduce to deep networks of a width of at most the size of the data. 

Such neural cRKBS have a regular structure from an analysis standpoint: They have norms, are complete, and have a well-described duality structure between weights and data: On one side, $\B^L$ contains functions of the data $X$ and on the other side $\B^{L\di}$ contains functions over $\Omega^L$ which represents the set of all the weights, except for the final layer. Here, we also see, why the RKBS framework is more suitable to describe deep neural networks than an RKHS framework. In a chain RKBS the $X$ stays the same, while the $\Omega$ can adapt: for a deeper neural network, the input size stays the same, but we get more weights. For a chain RKHS, the symmetry requires that $\Omega$ stays fixed, equal to $X$. 

In this neural cRKBS space, the resulting optimisation problem \eqref{eq:unconstrained_optimisation} actually becomes convex. However, if we reduce to finite deep neural networks using the representer theorem, finding the inner weights $W^1,\hdots, W^{L-1}$ is a non-convex problem. The trade-off between finite dimensionality and convexity is a common feature in RKBSs, in comparison to RKHSs, where the representer theorem gives constructive solutions. This corresponds to deep neural networks, where gradient descent is used to overcome the non-convexity. 

In some sense, the neural cRKBS gives us deep neural networks where instead of finding a finite set of weights, we take any (Radon) distribution of weights. The final linear layer brings it back together by choosing a linear combination of all the possibilities, which is a convex problem. In previous work, \citep[Corollary 13]{spek_duality_2023}, we have shown that integral RKBSs are a union of RKHSs each corresponding to a choice of inner weights, showing that indeed these RKBSs form the set of all possible choices. 

The construction using kernel chains provides a concrete description of the dual space $\B^{L\di}$. Its norm is a measure of the expressivity of $\B$ via the complexity of its unit ball. The optimisation problem \eqref{eq:unconstrained_optimisation} for the neural cRKBS also allows for a dual problem \citep[Theorem 18 and 20]{spek_duality_2023}. However, this again leads to a non-convex problem, where we need to find the inner weights $w\in \Omega^L$ that form the basis of $\B^L$.

\section{Summary and outlook}
In this final section, we summarise our work at a higher level and highlight research areas where our theory could have a potential impact in the future.

In this work, we addressed the fundamental question of identifying an appropriate function space for deep networks. At first glance, one might think that function compositions are a natural concept for networks in depth. However, at second glance, they have the unique limitation in depth that extra hidden bottleneck layers (with potentially different widths) appear, which prevent desirable reproducing properties in the function space. To overcome this key limitation, we introduced the reproducing kernel chain framework (cRKBS). By composing kernels rather than functions, we could naturally preserve desired properties from single-layer reproducing kernel Banach spaces through depth.

This paradigm shift via kernel representations and duality has the potential to offer new perspectives and solutions for deep learning theory and practice in the future. In the following paragraphs, we highlight four of such potential impact areas.

In geometric deep learning \citep{bronstein_geometric_2021}, weight-sharing in widths has offered fascinating insights over the past years from translational-invariant CNNs via group-invariant layers to Clifford algebra layers in GNNs. However, especially given deep networks representing solutions via neural ODEs, neural PDEs, or more generally via operator learning \citep{boulle_mathematical_2023}, a promising prospect would be to study geometric and sparsity properties of cRKBS as \textit{weight-sharing in-depth}.

In the context of random feature models and optimisation, the role of \textit{sampling weights} of deep neural networks \citep{bolager_sampling_2023} has proven to be a successful concept for the construction of trained networks orders of magnitude faster than with conventional training schemes. Because of the primal-dual nature of the chains and links in our kernel chaining framework, our findings could offer new insights into deep weight sampling as well as for \textit{sparsity} promoting optimization in deep architecture search \citep{heeringa_sparsifying_2024,heeringa_learning_2023}.

An important prospect of this work also lies in \textit{generalisation theory} \citep{zhang_understanding_2021}. Building upon existing work on Rademacher complexity for function classes, it could be interesting to explore the role of reproducing kernel chains and bounds related to the VC dimension. For this, the finite-data to finite-weight relationship in our reproducing kernel chaining framework could be of particular interest.

Despite the recent big success of graph neural networks and transformers for imaging and large language models, a deeper understanding of the \textit{attention mechanism in depth} would be important. Whereas for basic graph neural networks, the connection to Barron spaces is investigated, a more detailed comparison of our work with this work \citep{wright_transformers_2021} could be promising to explore further.

Those four impact areas alone are already very promising. However, throughout this work, it was also an interesting observation that the reproducing kernel chaining concept is more general than deep neural networks. It is intriguing to think in the future about relaxing the assumptions on deep neural networks and what this could offer in practice.

\bibliographystyle{plainnat}
\bibliography{references}

\end{document}